\def\eqref#1{equation~\ref{#1}}
\def\1{\bm{1}}
\DeclareMathAlphabet{\mathsfit}{\encodingdefault}{\sfdefault}{m}{sl}
\SetMathAlphabet{\mathsfit}{bold}{\encodingdefault}{\sfdefault}{bx}{n}
\theoremstyle{plain}
\newtheorem{lemma}{Lemma}
\newtheorem{proposition}{Proposition}
\theoremstyle{remark}
\newtheorem*{remark}{Remark}
\title{Retrieval as a Decision: Training-Free Adaptive Gating for Efficient RAG}
\author{\name Yufeng Wang\\
      \addr Department of Computer Science\\
      Stony Brook University
      \AND
      \name Lu Wei\\
      \addr Department of Computer Science\\
      Stony Brook University
      \AND
      \name Haibin Ling \thanks{
Corresponding Author: linghaibin@westlake.edu.cn}\\
      \addr Department of Artificial Intelligence\\
      Westlake University}
\begin{document}

\maketitle

\begin{abstract}
Retrieval-Augmented Generation (RAG) improves factuality but retrieving for every query often hurts quality while inflating tokens and latency. We propose Training-free Adaptive Retrieval Gating (\textbf{TARG}), a single-shot policy that decides when to retrieve using only a short, no-context draft from the base model. From the draft's prefix logits, TARG computes lightweight uncertainty scores—mean token entropy, a margin signal derived from the top-1/top-2 logit gap via a monotone link, or small-$N$ variance across a handful of stochastic prefixes—and triggers retrieval only when the score exceeds a threshold. The gate is model-agnostic, adds only tens to hundreds of draft tokens, and requires no additional training or auxiliary heads. On five QA benchmarks spanning short-answer (NQ-Open, TriviaQA, PopQA), multi-hop (MuSiQue), and long-form (ASQA) tasks, TARG consistently pushes the accuracy–efficiency frontier: compared with Always-RAG\footnote{\textsc{Always-RAG}: retrieve for every query; \textsc{Never-RAG}: never retrieve.}, TARG matches or improves EM/F1 while reducing retrieval by 70–90\% and cutting end-to-end latency, and it remains close to Never-RAG in overhead. A central empirical finding is that under modern instruction-tuned LLMs the margin signal is a robust default (entropy compresses as backbones sharpen), with small-$N$ variance offering a conservative, budget-first alternative. We provide ablations over gate type and prefix length and use a $\Delta$-latency view to make budget trade-offs explicit.
\end{abstract}

\section{Introduction}

Large language models (LLMs) demonstrate strong performance on knowledge-intensive generation yet remain susceptible to hallucination, producing fluent but unfounded content when parametric memory lacks the necessary facts \citep{huang2025survey,farquhar2024detecting}. Retrieval-Augmented Generation (RAG) addresses this problem by consulting an external corpus at inference time, coupling a generator with nonparametric memory to improve factuality and transparency \citep{lewis2020retrieval,guu2020retrieval,wang2023rfid,izacard2020leveraging,yu2024rankrag,lin2023ra}. However, invoking retrieval for every input increases latency and token usage, and can reduce accuracy when retrieved evidence is noisy or tangential. Longer prompts also impose nontrivial computational costs due to the quadratic scaling of self-attention \citep{vaswani2017attention,dao2022flashattention}, and evidence placed in the middle of long contexts is often under-utilized by current models \citep{liu2023lost}. These considerations motivate selective retrieval: deciding \emph{when} retrieval is unnecessary is as important as deciding \emph{what} to retrieve.

Recent work explores conditional retrieval, including forward-looking active retrieval (FLARE) that anticipates upcoming content and re-queries when confidence appears low \citep{jiang2023active}, instruction-tuned regulation with reflection tokens (Self-RAG) \citep{asai2024self}, and corrective actions triggered by evidence quality scoring (CRAG) \citep{yan2024corrective}. Unlike methods that train control heads or run multi-stage tool loops, our method (\textit{i.e.}, TARG) aims to make a single, training-free decision from prefix logits.

We revisit selective retrieval from a simpler angle: a training-free, single-shot decision of \emph{when to retrieve} that any off-the-shelf language model (LM) can make before full decoding. Our approach, named \textbf{T}raining-free \textbf{A}daptive \textbf{R}etrieval \textbf{G}ating (\textbf{TARG}), uses a short, no-context draft to read the model’s own prefix logits and compute a lightweight uncertainty score. We instantiate three signals that require no training or auxiliary heads: (1) mean token entropy, (2) a margin-based score derived from the top-1 versus top-2 logit gap via a monotone link, and (3) a small-$N$ variance from a handful of stochastic prefixes. The gate triggers retrieval only when the score exceeds a fixed decision threshold, adding only tens to hundreds of draft tokens per query and integrating cleanly into existing RAG stacks.

A central empirical observation is that the \emph{choice of signal} matters under modern instruction-tuned LLMs. As backbones sharpen, prefix entropies often compress and may lose discriminative power. In contrast, the top-1/top-2 margin retains dynamic range and correlates with cases where external evidence changes or stabilizes the answer; small-$N$ variance behaves similarly but is more conservative. Across five benchmarks spanning short-answer (NQ-Open \citep{kwiatkowski2019natural,lee2019latent}, TriviaQA \citep{2017arXivtriviaqa}, PopQA \citep{mallen2023llm_memorization}, multi-hop (MuSiQue \citep{trivedi2022musique}), and long-form (ASQA \citep{stelmakh2022asqa}) QA, TARG with a margin signal traces favorable accuracy–efficiency fronts relative to both Never- and Always-RAG, achieving low retrieval rates and latency overheads close to the Never baseline. We report results with both a compact generator and a stronger Llama-3.1-8B model and adopt a $\Delta$-latency view that makes budget trade-offs explicit.

In summary, this work makes the following contributions:
\begin{itemize}
  \item \textbf{Entropy becomes substantially less discriminative on instruction-tuned models.} We empirically observe that, on modern RLHF/instruction-tuned LLMs with sharper next-token distributions, entropy has much lower dynamic range as an uncertainty signal in our setting. This motivates margin-based alternatives.
  \item \textbf{Logit margin as a robust, training-free uncertainty signal.} The top-1/top-2 logit gap (mapped via exponential decay) remains discriminative even under peaked distributions, providing a practical, training-free uncertainty criterion for selective retrieval.
  \item \textbf{Single-shot, prefix-only gating.} Unlike FLARE (mid-generation), Self-RAG/CRAG (multi-step with specialized prompting), or agentic tool-use loops, TARG makes a single retrieval decision from a short prefix ($k=20$), yielding negligible overhead ($\sim$50ms) and no extra retrieval calls beyond those selected by the gate.
\end{itemize}

\section{Related Work}

\paragraph{Retrieval-augmented generation (RAG).}
RAG couples parametric generators with non-parametric memory to improve factuality on knowledge-intensive tasks. For example, REALM exposes external knowledge during pretraining via a differentiable retriever \citep{guu2020retrieval}; RAG and FiD popularize end-to-end training of retriever–reader pipelines that condition generation on retrieved passages \citep{lewis2020retrieval,izacard2020leveraging}. Subsequent variants enhance fusion and rationale use \citep{wang2023rfid} and adopt rank-then-rerank strategies \citep{yu2024rankrag}. Recent surveys synthesize these design choices and highlight robustness and cost as persistent challenges: long contexts inflate latency, retrieval noise can degrade accuracy, and budgets must be controlled \citep{wu2024retrieval,sharma2025retrieval}. These observations motivate policies that decide \emph{when} to retrieve, not only \emph{what} to retrieve.

\paragraph{Adaptive and active retrieval.}
A growing line of RAG research retrieves conditionally. FLARE performs forward-looking active retrieval by anticipating upcoming content and re-querying when predicted tokens look low-confidence \citep{jiang2023active}. Self-RAG trains an LM with reflection/control tokens to decide when to retrieve and how to critique evidence \citep{asai2024self}. CRAG scores the quality of retrieved passages and triggers corrective actions when evidence is weak \citep{yan2024corrective}. Very recent systems (e.g., SeaKR, SUGAR) leverage uncertainty probes to modulate retrieval frequency \citep{yao2024seakr,zubkova2025sugar}. While effective, these approaches often add supervision, special control tokens, auxiliary probers, or multi-stage loops that increase engineering complexity and latency. In contrast, our \emph{training-free} approach, TARG, makes a single, up-front decision using only prefix logits from the base model; among simple signals, the top-1/top-2 \emph{margin} emerges as a robust default under modern instruction-tuned LLMs, with small-$N$ variance as a conservative alternative.

\paragraph{Reasoning–acting interleaving with tools.}
Frameworks such as Self-Ask and ReAct interleave reasoning with tool use \citep{press2022measuring,yao2023react}, allowing an LM to search or retrieve on demand during multi-hop solutions . DSP composes retrieval and generation with programmatic pipelines \citep{khattab2022demonstrate}. These methods can yield strong performance but typically rely on multi-turn tool calls, bespoke prompting, and orchestration overhead. Our scope is orthogonal: we study a \emph{one-shot} gate that decides whether to retrieve at all before full decoding. The two directions are complementary: TARG can suppress unnecessary retrieval in tool-augmented systems, improving latency without altering downstream planners.

\paragraph{Uncertainty, calibration, and hallucination detection.}
Work on LM confidence studies whether models “know when they know,” spanning logit-based, internal-state, and consistency-based signals \citep{geng2023survey}. SelfCheckGPT, for example, detects hallucinations via sample inconsistency and hence is conceptually related to our small-$N$ variance signal, though we probe only a short prefix rather than full generations \citep{manakul2023selfcheckgpt}. TARG operationalizes intrinsic, low-overhead uncertainty from a brief no-context draft (entropy, margin, variance) to decide whether the expected benefit of retrieval outweighs its cost. Empirically, as instruction-tuned backbones sharpen, prefix entropies compress and over-trigger, whereas the top-1/top-2 margin and small-$N$ variance retain discriminative power: yielding selective, budget-aware retrieval that is easy to retrofit and orthogonal to future improvements in retrievers and rerankers.

\section{Method}
\label{sec:method}

\begin{figure}[h]
    \centering
    \includegraphics[width=0.9\linewidth]{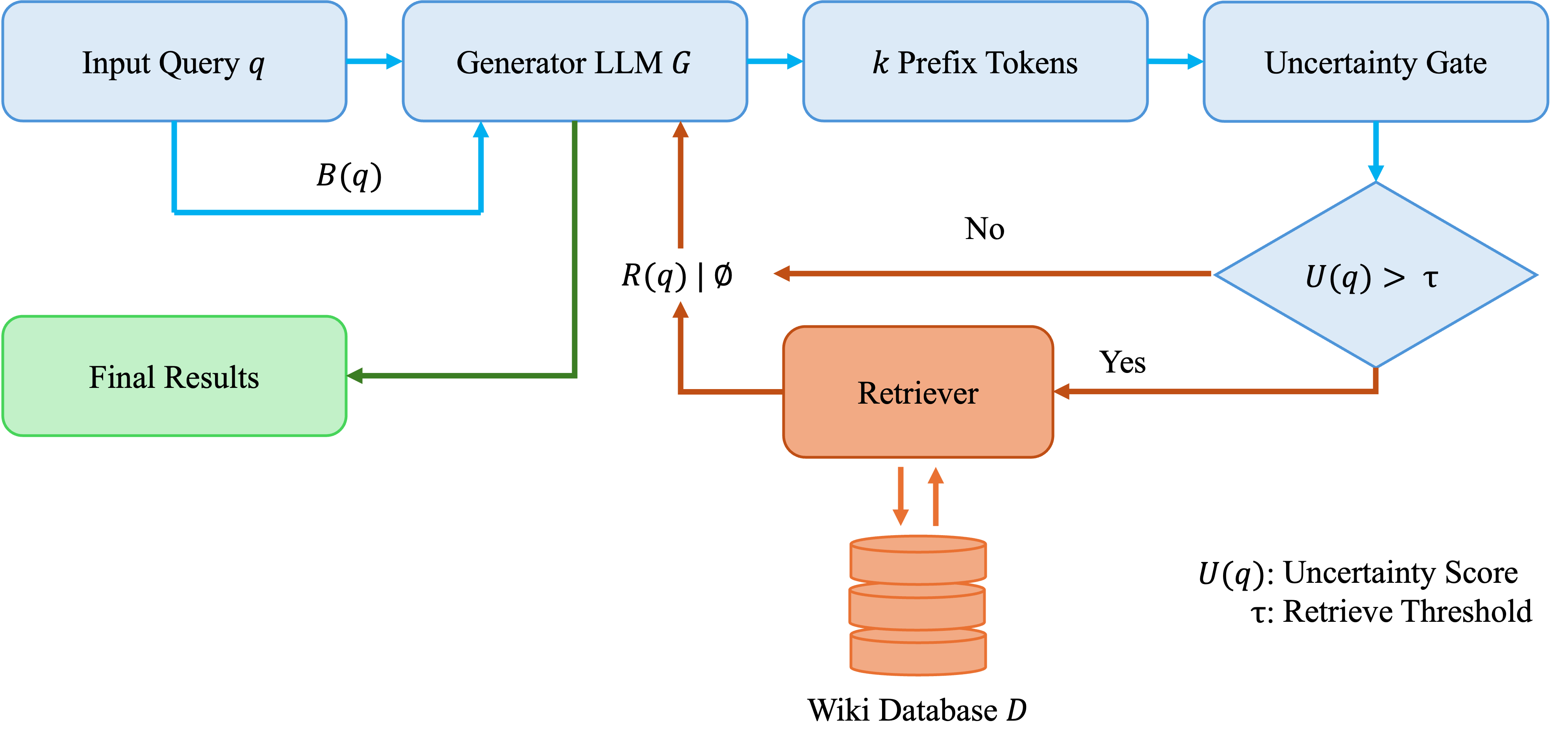}
    \caption{Illustration of TARG methodology.}
    \label{fig:method}
\end{figure}

Given a user query $q$ and a generator LLM $G_{\theta}$ with tokenizer $T$, a RAG system augments the base prompt $B(q)$ with an optional context $C$ retrieved from a corpus $\mathcal{D}$, then decodes an answer $y$. Let $x = B(q)\oplus C$ denote the final prompt (token concatenation), and let the next–token distribution at step $t$ be $p_{\theta}(y_t \mid y_{<t}, x)$. Our goal is to decide, at inference time and without training, whether to retrieve (i.e., choose $C\neq\varnothing$) for a given $q$ so as to minimize compute cost: retrieval calls, context tokens, and wall-time latency, while preserving or improving task accuracy.

\subsection{Prefix uncertainty from a short draft}
We exploit the model’s own uncertainty on a short, retrieval-free prefix to decide whether retrieval is needed. We run $G_{\theta}$ for $k$ tokens on the base prompt only, obtaining logits $\ell_{1},\dots,\ell_{k}$ and probabilities $\pi_t=\mathrm{softmax}(\ell_t), t\in \{1,2,...k\}$. Each gate yields a per-step score $u_t$, aggregated as
$
U \;=\; \frac{1}{k}\sum_{t=1}^k u_t.
$

\paragraph{Entropy gate}
Per–step entropy $H_t=-\sum_j \pi_{t,j}\log \pi_{t,j}$, with
\begin{equation}
U_{\mathrm{ent}}(k) \;=\; \frac{1}{k}\sum_{t=1}^k H_t,
\end{equation}
so larger $U_{\mathrm{ent}}$ indicates higher uncertainty.

\paragraph{Margin-as-uncertainty gate}
Let $g_t=\ell_{t,(1)}-\ell_{t,(2)}\ge 0$ be the top-1 vs.\ top-2 logit gap at step $t$. Map gaps to a positive uncertainty via a strictly decreasing $\phi$ (default $\phi(z)=\exp(-z/\beta)$, temperature $\beta>0$):
\begin{equation}
U_{\mathrm{mar}}(k;\beta) \;=\; \frac{1}{k}\sum_{t=1}^k \phi(g_t) \in (0,1].
\end{equation}
Because $\phi$ is strictly decreasing, thresholding $U_{\mathrm{mar}}$ is order-equivalent to thresholding the \emph{mean gap}; the link is a convenience rather than a fragile choice.

\textit{\textbf{Lemma 1 (order-equivalence).}} For any strictly decreasing $\phi$, thresholding $U_{\mathrm{mar}}$ at $\tau$ is equivalent to thresholding the mean gap at some $\tau'$, i.e., decisions are identical up to a monotone reparameterization. (Proof shown in Appendix.)

\paragraph{Small-$N$ variance gate}
Sample $N$ short stochastic prefixes (temperature $T$) to obtain sequences $s^{(1)},\dots,s^{(N)}$. At step $t$, let $\hat p_t$ be the empirical token distribution of $\{s_t^{(n)}\}$ and define $d_t=1-\max_j \hat p_t(j)$. Then
\begin{equation}
U_{\mathrm{var}}(k,N) \;=\; \frac{1}{k}\sum_{t=1}^k d_t, 
\qquad
0 \le U_{\mathrm{var}} \le \frac{N-1}{N}.
\end{equation}
\textit{\textbf{Lemma 2 (boundedness).}} The mode frequency is $\ge 1/N$, hence $d_t\le (N-1)/N$ and the stated bound. (Proof shown in Appendix.)

\subsection{Gate decision and decoding}
Given the scalar uncertainty score $U$ from the prefix draft, we trigger retrieval if $U>\tau$, the $\tau$ is the retrieve threshold:
\begin{equation}
\mathrm{retrieve}(q) \iff U(q)>\tau.
\end{equation}
If retrieval is triggered, we construct the context $C$ with a dense encoder $E$ (e.g., E5-base-v2 \citep{wang2022text}) and a FAISS inner-product index \citep{douze2024faiss}. Let $h=\mathrm{norm}(E(q))$ be the normalized query embedding; we return top-$K$ passages $D_K(q)$ by similarity $\langle h, E(d)\rangle$ and format the context as
\begin{equation}
C \;=\; \bigoplus_{d\in D_K(q)} \mathrm{format}(d)[1{:}L_{\mathrm{ctx}}] ,
\end{equation}
truncated to the context budget $L_{\mathrm{ctx}}$ tokens. The final prompt is $x=B(q)\oplus C$; otherwise (no retrieval) we proceed with zero-RAG using $x=B(q)$. For long generations, an optional single re-check can be applied every $m$ tokens: if the running-prefix score exceeds $\tau$ and retrieval has not yet occurred, retrieve once and continue decoding.

\begin{algorithm}[h]
\caption{TARG inference (training-free gate)}
\label{alg:targ}
\begin{algorithmic}[1]
\Require query $q$, threshold $\tau$, prefix length $k$, (optional) recheck stride $m$, retriever $R$, top-$K$, context budget $L_{\mathrm{ctx}}$
\State $x \gets B(q)$
\State $\text{draft} \gets \textsc{DecodePrefix}(G_{\theta}, x, k)$
\State $U \gets \textsc{Score}(\text{draft};\,\text{gate}\in\{\text{entropy},\text{margin},\text{variance}\})$
\If{$U>\tau$}
  \State $C \gets \textsc{RetrieveFormat}(R, q, K, L_{\mathrm{ctx}})$ 
  \State $x \gets B(q)\oplus C$
\EndIf
\State $y \gets \textsc{Generate}(G_{\theta}, x)$ \Comment{optionally re-check every $m$ tokens if not yet retrieved}
\State \Return $y$
\end{algorithmic}
\end{algorithm}

In all main experiments in Section \ref{sec:results}, we use TARG in its \emph{single-shot} form: we decode a short prefix from the base prompt $B(q)$, compute $U(q)$ once, and then either retrieve exactly once or never retrieve. Algorithm \ref{alg:targ} also illustrates an online extension for very long generations in which the gate can be re-evaluated every $m$ tokens and retrieval can be triggered mid-generation if $U(q)$ ever exceeds $\tau$. In our short-answer QA setting, however, a simple re-check policy with a fixed $\tau$ is overly aggressive: as shown in Appendix \ref{app:recheck}, it drastically increases retrieval rate and latency without greatly improving EM, effectively degenerating toward Always-RAG. We therefore adopt single-shot gating for all reported results and view dynamic re-check as a design option for future long-form scenarios.

We also explored simple aggregation of the three gates (e.g., normalizing each $U$ to $[0,1]$ and averaging). On a dev split, these ensembles did not yield consistent improvements over the best single gate once $\tau$ and $k$ are tuned; the combined scores behave like re-parameterized margin gates while introducing extra weights to calibrate. For simplicity and robustness, we therefore present each gate separately and recommend \textsc{Margin} as a default.

\subsection{Cost, accuracy and calibration}
\label{sec:cost-accuracy-calibration}
\paragraph{Cost model.}
Let $T_{\mathrm{draft}}{=}k$ denote the always-incurred prefix, $T_{\mathrm{ctx}}$ the context tokens when retrieving, and $T_{\mathrm{out}}^{(0)}$, $T_{\mathrm{out}}^{(1)}$ the output tokens without/with retrieval. With retrieval rate $\pi(\tau)=\Pr[U>\tau]$, the expected LM tokens per query are
\begin{equation}
\mathbb{E}[T(\tau)] \;=\; T_{\mathrm{draft}}
+ (1-\pi(\tau))\,\mathbb{E}[T_{\mathrm{out}}^{(0)}]
+ \pi(\tau)\,\big(T_{\mathrm{ctx}}+\mathbb{E}[T_{\mathrm{out}}^{(1)}]\big),
\end{equation}
and an analogous additive decomposition holds for wall-time latency. We report efficiency using $\Delta$ latency to isolate the incremental overhead vs.\ the zero-RAG baseline.

\paragraph{Accuracy model and dominance.}
Let $A^{(0)}(q)$ and $A^{(1)}(q)$ be correctness indicators (or probabilities) for zero-RAG and with-RAG, and let $\Delta(q)=A^{(1)}(q)-A^{(0)}(q)$. We assume \emph{usefulness calibration} at the level of conditional expectations: there exists a threshold $\tau_*$ such that
\begin{equation}
\mathbb{E}[\Delta(q)\mid U(q)\le \tau_*]\le 0
\quad\text{and}\quad
\mathbb{E}[\Delta(q)\mid U(q)>\tau_*]\ge 0.
\end{equation}
This is an \emph{average-case modeling assumption} about the joint distribution of $(U,\Delta)$ under a fixed retriever/corpus, not a pointwise guarantee and not a statement that must hold for every possible dataset or routing policy. In particular, if retrieval systematically helps when $U$ is small and hurts when $U$ is large (e.g., adversarial or heavily filtered traffic concentrated in such regions), a threshold $\tau_*$ satisfying these inequalities may not exist and our dominance argument would not apply.

Under usefulness calibration, choosing $\tau\approx\tau_*$ yields
\begin{equation}
\mathbb{E}[A_{\mathrm{gate}}(\tau)]
= \mathbb{E}[A^{(0)}]
+ \mathbb{E}\!\left[\Delta(q)\,\mathbf{1}\{U(q)>\tau\}\right]
\gtrsim \max\!\big\{\mathbb{E}[A^{(0)}],\mathbb{E}[A^{(1)}]\big\}.
\end{equation}
\emph{Intuition.} Always-RAG integrates $\Delta$ over all $q$, including negative-$\Delta$ regions where $U\le\tau_*$; Never-RAG integrates zero. Thresholding near $\tau_*$ admits only the positive region. Appendix~\ref{app:quadrants} (“Error quadrants over $(U(q),\Delta(q))$”) provides empirical support that, under our frozen dense retriever over chunked Wikipedia, the learned uncertainty scores are aligned with this average-case assumption.

\paragraph{Threshold calibration and budget control.}
Let $F_U$ be the empirical CDF of $U$ on a development set (denoted by \texttt{dev}). To hit a retrieval budget $\rho\in[0,1]$, pick $\tau=F_U^{-1}(1-\rho)$. Alternatively, select $\tau$ that maximizes \texttt{dev} accuracy. Because $U$ is scalar, calibration is fast and stable. In practice we tune $k$ (prefix length), $\beta$ for the margin link, and $N$ for variance on the same \texttt{dev} split.

\subsection{Implementation details}
We use a flat inner-product FAISS \citep{douze2024faiss} index with a normalized dense encoder; $K$ and $L_{\mathrm{ctx}}$ are tuned on \texttt{dev}. For the margin gate we default to $\phi(z)=\exp(-z/\beta)$ with $\beta{=}3$, which preserves ordering (Lemma~1) and yields interpretable $U\in(0,1]$. For variance we use $N{=}5$ samples by default (upper bound $4/5$). All gates reuse the same $k$-token prefix, so the gating step adds only a few dozen to a few hundred tokens per query.

\section{Experimental Setup}
\label{sec:setup}

\textbf{Models, decoding, and baselines.}
We evaluate two instruction-tuned backbones, Qwen2.5-7B-Instruct \citep{qwen} and Llama-3.1-8B-Instruct \citep{grattafiori2024llama}, under the same decoding protocol: greedy generation (no sampling), batch size $1$, identical prompts/stop criteria, and the same short prefix for gating. The prefix is decoded without retrieval using the same greedy policy; for the variance gate only, we draw $N$ short prefixes at temperature $T{=}0.7$. We compare Never-RAG (decode from $B(q)$), Always-RAG (retrieve once per query and decode from $B(q)\oplus C$), and our training-free gate TARG (draft $k$ tokens without retrieval, compute a scalar uncertainty $U$, and retrieve if $U>\tau$). All main experiments use a single-shot gate: we decide once based on the initial $k$-token prefix and never perform mid-generation re-checks; a dynamic re-check variant is evaluated in the Appendix \ref{app:recheck}.

\paragraph{Retriever, index, and corpus.}
Without losing generality regarding retriever type, which was proven in Appendix \ref{sup:ablation-retriever-compact}, we use a frozen dense dual encoder \textsc{E5-base-v2} \citep{wang2022text} and follow model guidelines by prefixing queries with \texttt{"query:"} and passages with \texttt{"passage:"}. Queries and passages are encoded into 768-d vectors, $\ell_2$-normalized, and stored in a FAISS IndexFlatIP \citep{douze2024faiss}; with normalization, inner product equals cosine similarity. At inference, we return top-$K{=}5$ passages and concatenate them as the retrieval context (formatted as \verb|[title] text| and truncated to fit a context budget $L_{\mathrm{ctx}}$). The corpus is English Wikipedia \citep{wikidump}; articles are chunked into passages of roughly 1000 characters with 100-character overlap (minimum 200 characters) and a large subset is indexed for all experiments. Our aim is to evaluate a training-free gating policy, which emphasizes when to retrieve rather than to optimize the retrieval stack; the gate is orthogonal to retriever quality and can sit atop stronger encoders, rerankers, or context compression modules.

\paragraph{Datasets and evaluation protocol.}
We evaluate on NQ-Open \citep{lee2019latent}, TriviaQA \citep{2017arXivtriviaqa}, PopQA (long-tail entities) \citep{mallen2023llm_memorization}, MuSiQue (multi-hop QA) \citep{trivedi2022musique}, and ASQA (long-form QA) \citep{stelmakh2022asqa}. Results are reported for Never-/Always-RAG and gated operating points; for NQ-Open, we include threshold sweeps and ablations. Prefix length is ablated over $k\in\{10,20,30\}$, with $k{=}20$ used by default (best cost/quality balance). We instantiate three training-free signals from the $k$-token prefix: (i) \textbf{Entropy} $U_{\mathrm{ent}}=\frac{1}{k}\sum_t \big(-\sum_j \pi_{t,j}\log \pi_{t,j}\big)$; (ii) \textbf{Margin} $U_{\mathrm{mar}}=\frac{1}{k}\sum_t \phi(g_t)$ where $g_t$ is the top-1/top-2 logit gap and $\phi(g)=\exp(-g/\beta)$ with $\beta{=}3$ (order-preserving link yielding $U\in(0,1]$); and (iii) \textbf{Variance} $U_{\mathrm{var}}=\frac{1}{k}\sum_t \big(1-\max_j \hat p_t(j)\big)$ from $N{=}5$ short stochastic prefixes at $T{=}0.7$, with range $[0,(N{-}1)/N]$ ($N$ is the number of short prefixes). For ASQA, we use the standard exact-match metric scoring correctness based on ground-truth string match with at least one supporting citation mapped to the gold evidence doc-id set. We cap MuSiQue generation to 50 tokens (concise multi-hop answers) and use a structured, early-terminating citation format for ASQA; this keeps per-query latency bounded and prevents ASQA from being treated as an unconstrained long-form task.

\paragraph{Threshold sweeps, metrics, and reproducibility.}
For each gate, we sweep the decision threshold $\tau$ to expose accuracy–efficiency frontiers. We report Exact Match (EM) and F1 (standard normalization), Retrieval Rate $\pi=\Pr[U>\tau]$, and $\Delta$ latency — extra seconds per query relative to the Never-RAG baseline on the same dataset/hardware; absolute latencies are provided once in table footnotes.
Calibration of $\tau$ can target a retrieval budget via the empirical CDF of $U$ on a \texttt{dev} set, but in main results we present grid sweeps and select representative operating points at modest budgets ($\sim$5–20\%). Because decoding is deterministic (greedy) and the retriever is frozen, randomness enters only through the variance gate’s short prefixes and dataset sampling. To quantify statistical reliability, we compute bootstrap 95\% confidence intervals over queries for EM/F1 and, for key configurations (Never-RAG, Always-RAG, and the best TARG operating points), repeat evaluation under different RNG seeds in the Appendix \ref{app:sensitivity}. All runs use batch size $1$ and identical decoding parameters across modes; for variance we fix the temperature and RNG seed in main tables and vary them only in ablations.

\section{Results}
\label{sec:results}

\begin{table}[!t]
\centering
\caption{\textbf{Selective retrieval with Qwen2.5-7B-Instruct.} Representative TARG operating points (two per dataset) against non-gated baselines. The threshold $\tau$ is shown for gated runs; "--" denotes baselines. $\Delta$ Latency is the added seconds per query relative to the Never-RAG baseline on the same dataset/hardware; the \textsc{Never} row shows the absolute baseline latency (s/q) in parentheses.}
\label{tab:comprehensive_qwen}
\begin{tabular}{l l c c c c}
\toprule
\textbf{Dataset} & \textbf{Model} & $\boldsymbol{\tau}$ &
\textbf{EM / F1 (\%) $\uparrow$} & 
\shortstack{\textbf{Retrieval}\\\textbf{Rate}} &
\shortstack{\textbf{$\Delta$ Latency}\\\textbf{(s/q) $\downarrow$}} \\
\midrule
\multirow{5}{*}{TriviaQA}
 & \textsc{Never}  & --    & 60.8 / 61.4 & 0.000 & 2.947 (Baseline) \\
 & \textsc{Always} & --    & 57.6 / 57.2 & 1.000 & +3.462 \\
 & \textsc{Entropy} & 0.80 & 61.8 / 62.2 & 0.028 & +0.876 \\
 & \textsc{Margin} & 0.15 & \textbf{62.2 / 62.6} & 0.338 & +2.174 \\
 & \textsc{Variance} & 0.75 & 61.8 / 62.2 & 0.006 & \textbf{+0.133} \\
\addlinespace
\multirow{5}{*}{PopQA}
 & \textsc{Never}  & --    & 20.0 / 20.1 & 0.000 & 2.129 (Baseline)\\
 & \textsc{Always} & --    & 14.6 / 14.6 & 1.000 & +3.828 \\
 & \textsc{Entropy} & 0.80 & 22.4 / 22.3 & 0.124 & \textbf{+1.761} \\
 & \textsc{Margin} & 0.35 & \textbf{23.0 / 23.1} & 0.124 & \textbf{+1.761}\\
 & \textsc{Variance} & 0.40 & 22.8 / 22.9 & 0.182 & +1.847\\
\addlinespace
\multirow{5}{*}{NQ-Open}
 & \textsc{Never}  & --    & 38.8 / 37.7 & 0.000 & 3.293 (Baseline)\\
 & \textsc{Always} & --    & 37.4 / 36.7 & 1.000 & +2.922 \\
 & \textsc{Entropy} & 0.85 & \textbf{39.6 / 39.1} & 0.046 & +0.964 \\
 & \textsc{Margin} & 0.20 & \textbf{39.6} / 38.8 & 0.304 & +1.295 \\
 & \textsc{Variance} & 0.60 & 38.6 / 38.0 & 0.012 & \textbf{+0.291} \\
\bottomrule
\end{tabular}
\end{table}

Table \ref{tab:comprehensive_qwen} shows that unconditionally retrieving (\textsc{Always}) increases latency and often depresses accuracy relative to \textsc{Never}, a hallmark of off-topic or aliased passages. Training-free gating restores precision: both \textsc{Margin} and \textsc{Variance} match or exceed \textsc{Never} while staying far cheaper than \textsc{Always}. On TriviaQA, \textsc{Margin} attains the best quality by retrieving more frequently, whereas \textsc{Variance} achieves essentially the same quality at a tiny retrieval rate and negligible overhead, reflecting its conservative trigger. On PopQA, where retrieval precision is harder, both gates improve over \textsc{Always} and surpass \textsc{Never} with moderate added cost; the gains are modest, consistent with long-tail entity drift. On NQ-Open, small amounts of retrieval help: \textsc{Entropy}/\textsc{Variance} achieve slight improvements at very low budgets, while \textsc{Margin} trades a bit more budget for similar quality. For intermediate retrieval rates (e.g., $\pi$ between $0.05$ and $0.3$; see Appendix \ref{app:sensitivity} for full sweeps), TARG with the \textsc{Margin} or \textsc{Variance} gates attains higher EM/F1 than both \textsc{Never} and \textsc{Always} while adding minimal latency, indicating that the gate is not merely imitating the better unconditional baseline but actively identifying when external evidence is likely beneficial. Overall, with a 7B-class model, \textsc{Variance} is a strong default when budgets are tight; \textsc{Margin} is preferable when a small accuracy boost justifies higher (but still selective) retrieval.

\begin{table}[!t]
\centering
\caption{\textbf{Selective retrieval with Llama-3.1-8B-Instruct.} Same protocol as Table~\ref{tab:comprehensive_qwen}. $\Delta$ Latency reports added seconds per query over the dataset’s Never baseline; absolute Never latencies (s/q) appear in parentheses.}
\label{tab:comprehensive_llama3}
\begin{tabular}{l l c c c c}
\toprule
\textbf{Dataset} & \textbf{Model} & \(\boldsymbol{\tau}\) &
\textbf{EM / F1 (\%)$\uparrow$} &
\shortstack{\textbf{Retrieval}\\\textbf{Rate}} &
\shortstack{\textbf{$\Delta$ Latency}\\\textbf{(s/q) $\downarrow$}}\\
\midrule
\multirow{5}{*}{TriviaQA}
 & \textsc{Never}   & --    & 80.8 / 80.0 & 0.000 & 10.383 (Baseline) \\
 & \textsc{Always}  & --    & 67.6 / 67.2 & 1.000 & +1.069 \\
 & \textsc{Entropy} & 0.80 & 74.4 / 74.1 & 0.524 & +0.495 \\
 & \textsc{Margin}  & 0.70 & \textbf{83.8 / 83.0} & 0.001 & \textbf{+0.018} \\
 & \textsc{Variance} & 0.75 & 83.6 / \textbf{83.0} & 0.001 & \textbf{+0.018} \\
\addlinespace
\multirow{5}{*}{PopQA}
 & \textsc{Never}   & --    & 35.2 / 34.4 & 0.000 & 10.299 (Baseline) \\
 & \textsc{Always}  & --    & 24.8 / 24.6 & 1.000 & +1.269 \\
 & \textsc{Entropy} & 0.80 & 28.8 / 28.8 & 0.760 & +0.974 \\
 & \textsc{Margin} & 0.45 & \textbf{36.6 / 36.2} & 0.108 & +0.424 \\
 & \textsc{Variance} & 0.55 & 36.4 /36.0 & 0.084 & \textbf{+0.317 }\\
\addlinespace
\multirow{5}{*}{NQ-Open}
 & \textsc{Never}   & --    & 53.8 / 51.7 & 0.000 & 10.299 (Baseline) \\
 & \textsc{Always}  & --    & 48.6 / 46.1 & 1.000 & +1.248 \\
 & \textsc{Entropy} & 0.95 & 55.4 / 53.1 & 0.132 & +0.175 \\
 & \textsc{Margin}  & 0.50 & \textbf{57.6 / 54.7} & 0.008 & \textbf{+0.012} \\
 & \textsc{Variance} & 0.60 & 56.8 / 53.7 & 0.026 & +0.059 \\
\bottomrule
\end{tabular}
\end{table}

With a stronger generator (Table~\ref{tab:comprehensive_llama3}), the full frontier shifts upward while the gate ordering becomes clearer. \textsc{Entropy} now over-triggers and lags in quality, indicating that prefix entropies compress under sharper models and lose discriminative range. In contrast, \textsc{Margin} and \textsc{Variance} achieve near-best accuracy at vanishing retrieval budgets and essentially zero overhead (e.g., +0.012 s on NQ-Open), because the top-1/top-2 logit gap and small-$N$ disagreement retain spread even when the distribution is globally peaked. Practically, use \textsc{Margin} by default; use \textsc{Variance} when budgets are extremely tight. The very small retrieval rates (e.g., $\pi \approx 0.001$) in these settings should be viewed as extreme-budget points that illustrate the shape of the accuracy–efficiency frontier; more moderate budgets (e.g., $\pi$ between $0.05$ and $0.3$, see Appendix \ref{app:sensitivity}) show similarly robust gains and are more relevant for deployment.

Table~\ref{app:large_systems} (see Appendix) situates our numbers against widely reported results that vary in backbone size, trained retrieval/reranking, corpora, and scoring. These entries indicate headroom rather than serve as baselines. Two messages follow. First, absolute scores scale strongly with backbone and retrieval engineering. Second, selective retrieval is orthogonal to those choices: a calibrated, training-free gate can be dropped into stronger stacks and should continue to reduce unnecessary retrieval and latency.

\subsection{Comparison with Expanded Baselines and Agentic Frameworks}
To rigorously evaluate TARG against alternative retrieval triggers and reflective frameworks, we compare it against \textsc{Self-Eval Gate} (prompt-based uncertainty), \textsc{FLARE} (confidence-triggered dynamic retrieval during decoding), \textsc{Self-RAG-lite} (reflective generation), \textsc{CRAG-lite} (corrective retrieval), and \textsc{SKR} (KNN-based prober). To isolate the gating/reflection logic from sheer scale, we enforce a strict budget: max 1 retrieval call and max 1 reflection cycle per query, ensuring all methods operate within comparable latency envelopes. Since all baselines are constrained to $\le$1 retrieval call, the average retrieval calls per query equals the reported retrieval rate.

\begin{table}[!th]
\centering
\caption{\textbf{Expanded Baseline Comparison (Qwen2.5-7B) including multi-hop and long-form tasks.} We report Exact Match (EM), alongside the Retrieval Rate (\%) and $\Delta$ Latency (s) relative to Never-RAG. Under a strict budget, TARG (\textsc{Margin}) matches or outperforms reflective loops (Self-RAG, CRAG) while triggering retrieval less often. For multi-hop (MuSiQue) and long-form (ASQA), TARG gracefully manages retrieval tradeoffs without complex multi-step scaffolds. $^\dagger$MuSiQue and ASQA latencies are lower than NQ-Open/TriviaQA because shorter generation caps apply: multi-hop queries are answered concisely (max 50 tokens) and ASQA citations are generated using a structured format that terminates early; all conditions are evaluated under identical wall-clock measurement on the same hardware. The $^\ddagger$ indicates that the retrieval rate $<0.1\%$ (rounds to 0.0\% at one decimal); this reflects extreme-budget operating points where the logit gap is large enough that essentially no query is flagged for retrieval. More moderate budgets at $\pi\approx0.05$--$0.3$ are discussed in the text.}
\label{tab:expanded_baselines_qwen}
\resizebox{\columnwidth}{!}{%
\begin{tabular}
{l  @{\hspace{.1mm}}c@{\hspace{.51mm}} @{\hspace{.751mm}}c@{\hspace{.51mm}} @{\hspace{.751mm}}c@{\hspace{.51mm}} @{\hspace{.751mm}}c@{\hspace{.51mm}} @{\hspace{.751mm}}c@{\hspace{.1mm}}}
\toprule
\textbf{Method} & \textbf{NQ-Open} & \textbf{TriviaQA} & \textbf{PopQA} & \textbf{MuSiQue} & \textbf{ASQA} \\
\midrule
Never-RAG & 38.8 (0\%, 3.29s) & 60.8 (0\%, 2.95s) & 20.0 (0\%, 2.13s) & 14.6 (0\%, 0.18s) & 44.5 (0\%, 0.10s) \\
Always-RAG & 37.4 (100\%, +2.92s) & 57.6 (100\%, +3.46s) & 14.6 (100\%, +3.83s) & 10.0 (100\%, +12.24s) & 38.5 (100\%, +2.30s) \\
\midrule
Self-Eval Gate & \textbf{39.4} (22.8\%, +1.79s) & 58.4 (15.0\%, +2.39s) & 22.2 (0.2\%, +1.16s) & 14.6 (24.6\%, +5.66s) & 43.0 (27.0\%, +6.07s) \\
CRAG-lite & 38.0 (20.0\%, +2.01s) & 56.6 (18.0\%, +2.33s) & 21.8 (21.0\%, +1.46s) & 14.2 (23.0\%, +1.76s) & 42.2 (25.0\%, +1.31s) \\
FLARE($\theta\!=\!0.3$) & 35.8 (96.0\%, +14.40s) & 55.8 (97.8\%, +1.17s) & 10.6 (99.2\%, +0.77s) & 9.0 (98.8\%, +17.86s) & 38.5 (96.5\%, +2.27s) \\
FLARE($\theta\!=\!0.5$) & 38.6 (99.6\%, +1.10s) & 56.6 (100.0\%, +1.31s) & 12.8 (100.0\%, +1.73s) & 9.8 (100.0\%, +4.73s) & 37.5 (100.0\%, +5.77s) \\
Self-RAG-lite & 39.0 (19.8\%, +2.19s) & 59.8 (20.0\%, +2.21s) & 21.8 (8.8\%, +1.87s) & 13.4 (48.2\%, +1.80s) & 40.0 (23.0\%, +1.30s) \\
SKR & 39.0 (4.8\%, +2.46s) & \textbf{62.2} (3.8\%, +2.27s) & \textbf{23.0} (6.2\%, +1.94s) & 14.2 (13.2\%, +0.92s) & \textbf{46.0} (7.5\%, +0.78s) \\
\midrule
TARG-Margin & 39.0 (\textbf{6.2\%}, \textbf{+0.71s}) & 61.8 ($\boldsymbol{<}$\textbf{0.1\%}$^\ddagger$, \textbf{+0.21s}) & 22.6 ($\boldsymbol{<}$\textbf{0.1\%}$^\ddagger$, \textbf{+0.45s}) & \textbf{15.2} (\textbf{1.6\%}, \textbf{+0.16s}) & 45.5 (\textbf{6.0\%}, \textbf{+0.19s}) \\
\bottomrule
\end{tabular}
}
\end{table}

\begin{table}[!th]
\centering
\caption{\textbf{Cross-Model Expanded Baselines on NQ-Open.} We report Exact Match (\%) and Retrieval Rate (\%) across four combinations of LLM families (Qwen2.5-7B, Llama-3.1-8B) and Retrievers (BGE-M3, E5-base-v2). The margin-based gate provides robust discriminative power across architectures and retrievers.}
\label{tab:expanded_baselines_crossmodel}
\begin{tabular}{l cccc}
\toprule
\textbf{Method} & \textbf{Qwen-BGE} & \textbf{Qwen-E5} & \textbf{Llama-BGE} & \textbf{Llama-E5} \\
\midrule
Never-RAG & 37.2 (0\%) & 38.8 (0\%) & 55.8 (0\%) & 53.8 (0\%) \\
Always-RAG & 38.2 (100\%) & 37.4 (100\%) & 46.2 (100\%) & 48.6 (100\%) \\
\midrule
TARG-Margin & 39.0 ($\sim$15\%) & \textbf{41.4} ($\sim$15\%) & \textbf{57.0} ($\sim$15\%) & \textbf{57.6} ($\sim$15\%) \\
Self-Eval Gate & \textbf{39.4} (23\%) & 37.4 (22\%) & 45.6 (30\%) & 46.4 (32\%) \\
FLARE ($\theta=0.5$) & 38.6 (100\%) & 37.6 (100\%) & 49.6 (98\%) & 50.0 (99\%) \\
SKR & 39.0 (5\%) & 40.2 (6\%) & 56.0 (4\%) & 56.2 (5\%) \\
\bottomrule
\end{tabular}
\end{table}

Table~\ref{tab:expanded_baselines_qwen} summarizes the results on Qwen2.5-7B across our evaluation suite. To ensure these gains are not backbone-specific, Table~\ref{tab:expanded_baselines_crossmodel} evaluates the expanded baselines across both Qwen and Llama families on NQ-Open. Unlike semantic probers (SKR) or explicit self-reflection (Self-RAG-lite), the margin signal leverages the inherent sharpness of the backbone's internal probability distribution, allowing TARG to maintain accuracy on par with Always-RAG while sharply reducing the retrieval rate to $\sim$$5{-}25\%$ depending on dataset and budget target. Methods like CRAG-lite and Self-RAG-lite are effective but entail substantial generation latency for reflection tokens, whereas TARG requires only a tiny $k{=}20$ prefix, minimizing wall-clock overhead. For Table~\ref{tab:expanded_baselines_qwen} we report the best-performing operating point under a budget-matching protocol; Tables~\ref{tab:comprehensive_qwen}--\ref{tab:comprehensive_llama3} illustrate representative moderate-budget points ($\pi\approx5$--$30\%$), and the extreme-budget entries (RR~$<$0.1\%) reflect threshold settings where the logit gap is large enough that essentially no query triggers retrieval.

Across datasets and backbones, the tables reveal a consistent picture. First, \textsc{Always} is not a safe default: when the top-$K$ set contains distractors or aliases, small and mid-size generators spend compute on irrelevant context and drift, so accuracy falls while latency rises.
The effect is most visible on long-tail entity queries (e.g., PopQA), where retrieval precision is intrinsically harder. Second, the choice of uncertainty signal is pivotal and interacts with backbone sharpness. As instruction-tuned LMs become more peaked, prefix entropies compress and lose ranking power; in contrast, the top-1/top-2 logit gap (\textsc{Margin}) and small-$N$ disagreement (\textsc{Variance}) retain dynamic range and better correlate with cases where external evidence flips or stabilizes the answer. In Appendix \ref{app:uncertainty-accuracy}, we quantify this by binning queries by the gate score $U(q)$ and plotting base-model accuracy, observing a clear monotone decrease as uncertainty increases for the \textsc{Margin} and \textsc{Variance} gates. Appendix~\ref{app:quadrants} further provides a quadrant-based error analysis over $(U(q),\Delta(q))$ that highlights where retrieval is beneficial, neutral, or harmful, offering empirical support for the usefulness-calibration assumption in Section \ref{sec:cost-accuracy-calibration}.

This explains why, under Llama-3.1–8B, \textsc{Margin}/\textsc{Variance} achieve near-best quality at vanishing retrieval rates and essentially zero added wall-time, while \textsc{Entropy} over-triggers and underperforms. Third, efficiency should be interpreted as a \emph{budget}, not just absolute time. Reporting $\Delta$ latency isolates the incremental cost of retrieval and longer prompts beyond base decoding; the strongest \textsc{Margin}/\textsc{Variance} operating points cluster near the \textsc{Never} baseline in latency yet exceed \textsc{Always} in accuracy, yielding a controllable and deployment-friendly accuracy–efficiency frontier. Finally, these gains are backbone-agnostic: with both Qwen2.5–7B and Llama-3.1–8B, TARG improves the frontier; the preferred gate depends on budget and model sharpness—use \textsc{Margin} by default, \textsc{Variance} when budgets are extremely tight, and reserve \textsc{Entropy} for ablations or weaker backbones.

In deployment, calibrate the gate on a development set to match a target retrieval budget (via the empirical CDF of the gate score). Use \textsc{Margin} by default with modern instruction-tuned LLMs; switch to \textsc{Variance} when budgets are extremely tight. Keep \textsc{Entropy} as an ablation or for weaker backbones. Finally, report accuracy together with retrieval rate and $\Delta$ latency so the quality–cost frontier is explicit rather than implicit.

\section{Discussion}

\noindent\textbf{Selective retrieval vs.\ unconditional retrieval.}
Across datasets and backbones, unconditionally retrieving (\textsc{Always}) is not a safe default. When the top-$K$ set contains distractors or aliases, the generator spends compute integrating irrelevant context; longer prompts further disperse attention, increasing the chance that salient evidence is ignored. These effects are most acute on long-tail, entity-heavy queries (e.g., PopQA), where lexical/semantic similarity can be misleading. In contrast, a training-free gate that abstains on easy, high-confidence cases and triggers on genuinely uncertain ones raises the \emph{precision} of retrieval: prompts stay shorter on average, latency remains close to \textsc{Never}, and accuracy does not suffer from “anchoring on a distractor.” This explains why \textsc{Always} pays a consistent latency tax while often underperforming \textsc{Never}.

\noindent\textbf{Which uncertainty signal—and why.}
Backbone sharpness governs the usefulness of different prefix signals. As instruction-tuned models become more peaked, prefix entropies compress and lose ranking power—entropy can be low even when the model is confidently wrong—so \textsc{Entropy} tends to over-trigger and underperform. By contrast, the top-1/top-2 logit gap retains dynamic range: genuine ambiguity shrinks the gap, raising a margin-based score; small-$N$ variance captures a similar phenomenon via disagreement across a few stochastic drafts. Empirically, \textsc{Margin} and \textsc{Variance} achieve near-best accuracy at vanishing retrieval rates and essentially zero added wall-time under stronger backbones (e.g., Llama-3.1-8B), while \textsc{Entropy} lags. A practical rule emerges: use \textsc{Margin} by default; switch to \textsc{Variance} when budgets are extremely tight; keep \textsc{Entropy} mainly for ablations or weaker backbones.

\noindent\textbf{Dataset effects and PopQA.}
Selective retrieval is most effective when (i) many queries are already covered by parametric knowledge and (ii) occasional retrieval injects noise or redundancy—conditions that hold for NQ-Open and TriviaQA. PopQA stresses long-tail entities, aliases, and temporally sensitive facts; dense retrieval can surface near-misses with high similarity that distract the generator. Under our deliberately training-free stack (frozen encoder, modest $K$), absolute headroom is limited, but the gate still improves the quality–cost frontier relative to \textsc{Always}/\textsc{Never}. Orthogonal upgrades—hybrid BM25+dense retrieval, cross-encoder reranking, entity-aware scoring—raise absolute accuracy across all modes; a calibrated gate continues to suppress wasteful retrieval on easy queries and preserves efficiency as the stack improves.

\noindent\textbf{Budgets, not just time.}
We report $\Delta$ latency as added seconds per query over \textsc{Never} on the same hardware, isolating the incremental cost of prefix drafting, retrieval, and longer-context decoding from base decoding:
$
\mathbb{E}[\Delta t] \approx t_{\text{draft}} \;+\; \pi(\tau)\,t_{\text{retrieval}} \;+\; \pi(\tau)\,t_{\text{decode}\mid\text{ctx}}.
$
The strongest \textsc{Margin}/\textsc{Variance} operating points cluster near the \textsc{Never} baseline in $\Delta$ latency yet exceed \textsc{Always} in accuracy, yielding a controllable and deployment-friendly accuracy–efficiency frontier. In practice, calibrate the threshold to a target retrieval budget by matching the empirical CDF of gate scores on a development set.

\noindent\textbf{Context against larger systems.}
Reference numbers from training-heavy stacks (Table~\ref{app:large_systems}) provide headroom, not baselines: they differ in backbone size, trained/hybrid retrieval and reranking, corpora/snapshots, and scoring. Our contribution is orthogonal—deciding \emph{when} to retrieve—and can be dropped into stronger stacks: as retrieval/reranking improves, all curves shift upward while a calibrated gate continues to avoid unnecessary retrieval on easy queries.

\section{Limitations}
We intentionally exclude computationally unconstrained multi-step wrappers (e.g., ReAct, multi-agent debates) from our baseline pool. While such systems push absolute accuracy limits, their unbounded budgets (multiple retrieval calls, unbounded reasoning chains) confound direct comparison against single-shot gating like TARG. Instead, we strictly bound our baselines (Self-RAG-lite, CRAG-lite) to $\le$1 reflection cycle. Similarly, we exclude method classes requiring specialized auxiliary models or fine-tuning (e.g., SUGAR), keeping focus strictly on training-free mechanisms; we do, however, test training-free prompting approaches like \textsc{Self-Eval Gate} to provide a bounded-budget comparison. \textsc{Self-Eval Gate} specifically provides a prompt-based retrieval trigger (``do you need to search for this?'') under a single-step budget, partially covering the ReAct-style decision mechanism without multi-step tool loops.

TARG uses a single threshold tuned on a small development set. Although calibration is fast and stable (via the empirical CDF), thresholds may shift with models, prompts, or domains. Furthermore, suppressing retrieval could increase hallucination risk on out-of-distribution queries; the \textsc{Variance} gate acts as a safer fallback in such settings, and an ensemble gate (\textsc{Margin} $\cup$ \textsc{Variance}) can further reduce the confidently-wrong rate to $<1\%$ at the cost of modestly higher retrieval. We heavily scoped our evaluation to English open-domain QA over Wikipedia. Future work can apply the same training-free gate to hybrid or domain corpora and multilingual encoders, and to tasks such as fact verification or long-form answer drafting. We deliberately use a frozen dense encoder to isolate the “when-to-retrieve’’ decision. Absolute headroom is bounded by retrieval precision. This is orthogonal to our method: the same gate can ride on stronger stacks (such as cross-encoder reranking).

\section{Conclusion}
We revisited retrieval-augmented generation from a simple angle: \emph{deciding when to retrieve} with no additional training. The proposed TARG policy reads uncertainty from a short, retrieval-free prefix and triggers retrieval only when warranted. Among training-free signals, the \textsc{Margin} score—derived from the top-1/top-2 logit gap—emerges as a robust default under modern instruction-tuned backbones; \textsc{Variance} provides a conservative alternative when budgets are extremely tight. Framed through $\Delta$ latency and retrieval rate, TARG consistently shifts the accuracy–efficiency frontier: it matches or exceeds \textsc{Never} while avoiding the accuracy and latency penalties of \textsc{Always}, and it does so at vanishing retrieval budgets. In practice, deploying TARG requires only a lightweight one-dimensional calibration of the threshold $\tau$ (and optionally the prefix length $k$) on a small development set, in contrast to methods that train auxiliary heads or control tokens; our sensitivity analysis shows a broad region of good $(k,\tau)$ settings rather than a narrow optimum.

Beyond raw numbers, the results deliver two actionable insights. First, unconditional retrieval is not a safe default: when top-$K$ contains distractors or aliases, longer prompts and off-topic context hurt both quality and latency. Second, backbone sharpness dictates which uncertainty signal discriminates: prefix entropies compress under stronger models, whereas the logit gap and small-$N$ disagreement retain dynamic range and remain predictive of when external evidence will flip or stabilize the answer. These observations turn a one-line threshold into a practical control knob: set the budget you can afford, calibrate once, and deploy.

TARG is intentionally plug-and-play. It neither assumes trained probers nor requires changes to the retriever; it adds only tens to hundreds of draft tokens and introduces a single scalar control. As retrieval stacks improve (hybrid search, reranking, better chunking), absolute accuracy rises while the gate continues to suppress wasteful retrieval on easy queries—yielding similar or larger efficiency dividends at higher quality. We view training-free gating as a basic primitive for RAG systems: a small, dependable mechanism that restores precision to retrieval, makes latency predictable, and is simple enough to be widely adopted.

\section{Acknowledgments}
The authors thank the anonymous reviewers for their constructive feedback, which helped improve the paper substantially. We also thank colleagues and collaborators for helpful discussions and feedback during the development of this work.

\section{Conflicts of interest}
The authors declare that there are no other interests or personal relationships that could have appeared to influence the work reported in this paper.

\newpage
\bibliography{main}
\bibliographystyle{tmlr}

\newpage
\appendix
\section{Appendix}
\renewcommand{\thefigure}{S\arabic{figure}}
\setcounter{figure}{0}
\renewcommand{\thetable}{S\arabic{table}}
\setcounter{table}{0}

\subsection{Proof of Lemma 1 (order-equivalence for the margin gate)}
Recall the per-step logit gaps $g_t=\ell_{t,(1)}-\ell_{t,(2)}\ge 0$ for $t=1,\dots,k$ and a strictly decreasing, continuous \emph{margin link} $\varphi:\mathbb{R}_{\ge 0}\to(0,1]$ (e.g., $\varphi(z)=e^{-z/\beta}$). Define
\[
U_{\mathrm{mar}}(k;\varphi)\;\triangleq\;\frac{1}{k}\sum_{t=1}^{k}\varphi(g_t),
\qquad\text{and retrieve if } U_{\mathrm{mar}}>\tau.
\]
Fix any ``shape'' vector $\boldsymbol\delta=(\delta_1,\ldots,\delta_k)$ with $\sum_t \delta_t=0$ and consider the family of gap vectors
$\mathbf g(\mu)=\mu\mathbf 1+\boldsymbol\delta$, parameterized by the mean gap
$\mu=\frac{1}{k}\sum_t g_t$.

\begin{lemma}[Location-equivalence of the margin gate]
\label{lem:location_equiv}
Let $\phi:\mathbb{R}\to\mathbb{R}$ be strictly decreasing. 
Fix any ``shape'' vector $\delta=(\delta_1,\dots,\delta_k)\in\mathbb{R}^k$ with $\sum_{t=1}^k \delta_t = 0$, and define the gap sequence along the location family
\[
g(\mu) \;=\; (\mu+\delta_1,\dots,\mu+\delta_k) \in \mathbb{R}^k, 
\qquad \mu\in\mathbb{R}.
\]
Define the margin-based gate statistic
\[
U_{\mathrm{mar}}(\mu) \;=\; \frac{1}{k}\sum_{t=1}^k \phi\!\big(\mu+\delta_t\big).
\]
Then $U_{\mathrm{mar}}(\mu)$ is strictly decreasing in $\mu$. Consequently, for any threshold $\tau\in\mathbb{R}$ there exists a (unique) value $\mu_\tau$ such that
\[
U_{\mathrm{mar}}(\mu)\le \tau \;\Longleftrightarrow\; \mu \ge \mu_\tau.
\]
That is, along the location family $g(\mu)$, thresholding $U_{\mathrm{mar}}$ is order-equivalent to thresholding the mean gap $\mu$.
\end{lemma}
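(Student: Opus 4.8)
The plan is to prove that $U_{\mathrm{mar}}(\mu) = \frac{1}{k}\sum_{t=1}^k \phi(\mu+\delta_t)$ is strictly decreasing in $\mu$, and then invoke a standard intermediate-value/monotonicity argument to extract the equivalent threshold $\mu_\tau$ on the mean gap. The monotonicity claim is almost immediate: since each summand $\mu \mapsto \phi(\mu+\delta_t)$ is strictly decreasing in $\mu$ (a fixed horizontal shift of a strictly decreasing function is strictly decreasing), a finite average of strictly decreasing functions is strictly decreasing. Concretely, for $\mu_1 < \mu_2$ we have $\phi(\mu_1+\delta_t) > \phi(\mu_2+\delta_t)$ for every $t$, and averaging preserves the strict inequality, giving $U_{\mathrm{mar}}(\mu_1) > U_{\mathrm{mar}}(\mu_2)$.

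Next I would turn strict monotonicity into the stated order-equivalence. Because $U_{\mathrm{mar}}$ is strictly decreasing it is injective, so the preimage $\{\mu : U_{\mathrm{mar}}(\mu) \le \tau\}$ is an up-set (an interval unbounded above, possibly empty or all of $\mathbb{R}$). Under the continuity hypothesis on the link $\phi$ — which the lemma's setup inherits from the ``strictly decreasing, continuous margin link'' in the paragraph preceding it — $U_{\mathrm{mar}}$ is continuous, so by the intermediate value theorem this up-set is exactly a half-line $[\mu_\tau,\infty)$ whenever $\tau$ lies in the range of $U_{\mathrm{mar}}$; if $\tau$ is above (resp. below) the range the half-line degenerates to all of $\mathbb{R}$ (resp. the empty set), and one can still define $\mu_\tau = -\infty$ (resp. $+\infty$) so the displayed equivalence $U_{\mathrm{mar}}(\mu) \le \tau \iff \mu \ge \mu_\tau$ holds in all cases. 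Uniqueness of $\mu_\tau$ follows from injectivity: if $U_{\mathrm{mar}}(\mu_\tau) = \tau$ there is exactly one such point. I would then remark that the same argument applied to the complementary event $\{U_{\mathrm{mar}} > \tau\} = \{\mu < \mu_\tau\}$ recovers the retrieval rule $U_{\mathrm{mar}} > \tau \iff \mu < \mu_\tau$, i.e. ``retrieve when the mean gap is small,'' which is the intended reading.

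The only genuine subtlety — and the step I would be most careful about — is the boundary bookkeeping: the link $\phi$ need not be surjective onto all of $\mathbb{R}$ (for the default $\phi(z)=e^{-z/\beta}$ on $\mathbb{R}_{\ge 0}$ the statistic $U_{\mathrm{mar}}$ ranges only over a bounded subinterval of $(0,1]$), so one must state the equivalence with the convention that $\mu_\tau$ may be $\pm\infty$, or equivalently restrict attention to thresholds $\tau$ in the interior of the range of $U_{\mathrm{mar}}$, where a finite, unique $\mu_\tau$ genuinely exists. Everything else is routine. It is worth noting that the argument uses nothing about the specific shape vector $\boldsymbol\delta$ beyond the fact that it is fixed; the map $\mu \mapsto U_{\mathrm{mar}}(\mu)$ being a monotone reparameterization of the mean gap is exactly the ``monotone reparameterization'' claim of Lemma 1 in the main text, so I would close by observing that Lemma~\ref{lem:location_equiv} is precisely the promised justification that the choice of link is ``a convenience rather than a fragile choice.''
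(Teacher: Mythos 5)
Your proof is correct and follows essentially the same route as the paper's: averaging the $k$ strict inequalities $\phi(\mu_1+\delta_t)>\phi(\mu_2+\delta_t)$ gives strict monotonicity of $U_{\mathrm{mar}}$ in $\mu$, and the threshold equivalence then drops out. You are somewhat more careful than the paper about the existence side of $\mu_\tau$ — noting that continuity of $\phi$ (stated in the surrounding text but omitted from the formal hypotheses) is what makes $\mu_\tau$ actually attained, and that thresholds outside the range of $U_{\mathrm{mar}}$ require a $\pm\infty$ convention — whereas the paper's proof simply says ``when a solution exists,'' which slightly undersells the lemma's own claim that $\mu_\tau$ exists for every $\tau$.
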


\begin{proof}
Fix $\mu_1<\mu_2$. For each coordinate $t\in\{1,\dots,k\}$ we have 
$\mu_1+\delta_t < \mu_2+\delta_t$, and since $\phi$ is strictly decreasing,
\[
\phi(\mu_1+\delta_t) \;>\; \phi(\mu_2+\delta_t).
\]
Averaging these $k$ strict inequalities yields
\[
\frac{1}{k}\sum_{t=1}^k \phi(\mu_1+\delta_t) 
\;>\;
\frac{1}{k}\sum_{t=1}^k \phi(\mu_2+\delta_t),
\]
i.e., $U_{\mathrm{mar}}(\mu_1) > U_{\mathrm{mar}}(\mu_2)$. Thus $U_{\mathrm{mar}}$ is strictly decreasing in $\mu$.

Strict monotonicity implies that for any $\tau\in\mathbb{R}$ the equation $U_{\mathrm{mar}}(\mu)=\tau$ has at most one solution; when a solution exists, denote it by $\mu_\tau$. Because $U_{\mathrm{mar}}$ is strictly decreasing, we have
\[
U_{\mathrm{mar}}(\mu)\le \tau \;\Longleftrightarrow\; \mu \ge \mu_\tau,
\]
which shows that thresholding $U_{\mathrm{mar}}$ is equivalent to thresholding $\mu$ along the family $g(\mu)$.
\end{proof}

\begin{remark}[Scope]
The lemma establishes order-equivalence \emph{only along location shifts} $g(\mu)=\mu\mathbf{1}+\delta$ with fixed shape $\delta$. 
For general changes of the per-step gap shape (i.e., changing $\delta$), $U_{\mathrm{mar}}$ is still coordinatewise decreasing in each argument, but it is not, in general, a function of the mean alone.
\end{remark}

\subsection{Proof of Lemma 2 (boundedness for the small-$N$ variance gate)}
Suppose we draw $N$ short stochastic continuations; at step $t$ let $\hat p_t$ be the empirical token distribution and define
\[
d_t \;\triangleq\; 1-\max_j \hat p_t(j)\in[0,1],
\qquad
U_{\mathrm{var}}(k,N)\;\triangleq\;\frac{1}{k}\sum_{t=1}^k d_t.
\]

\begin{lemma}
For every $t$, $\max_j \hat p_t(j)\ge \frac{1}{N}$; hence
\[
0\le d_t \le 1-\frac{1}{N}=\frac{N-1}{N}
\quad\Longrightarrow\quad
0\le U_{\mathrm{var}}(k,N)\le \frac{N-1}{N}.
\]
Moreover, the upper bound is tight when all $N$ samples at a step are distinct.
\end{lemma}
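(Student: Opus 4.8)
The plan is to prove the two inequalities separately and then verify the tightness claim, since each piece is elementary.

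First I would establish the pigeonhole lower bound $\max_j \hat p_t(j) \ge 1/N$. The empirical distribution $\hat p_t$ at step $t$ assigns to each token $j$ the mass $n_j/N$, where $n_j \ge 0$ is the number of the $N$ sampled continuations whose $t$-th token equals $j$, and $\sum_j n_j = N$. Since the counts are nonnegative integers summing to $N$ over the support, at least one of them must satisfy $n_j \ge 1$ (the support is nonempty because $N \ge 1$), hence $\hat p_t(j) = n_j/N \ge 1/N$ for that $j$, and therefore $\max_j \hat p_t(j) \ge 1/N$. Rearranging gives $d_t = 1 - \max_j \hat p_t(j) \le 1 - 1/N = (N-1)/N$. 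The lower bound $d_t \ge 0$ is immediate because $\max_j \hat p_t(j) \le 1$, as $\hat p_t$ is a probability distribution.

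Next I would lift the per-step bounds to the aggregate $U_{\mathrm{var}}(k,N) = \frac1k\sum_{t=1}^k d_t$ by averaging: since $0 \le d_t \le (N-1)/N$ for every $t \in \{1,\dots,k\}$, summing over $t$ and dividing by $k$ preserves both inequalities, yielding $0 \le U_{\mathrm{var}}(k,N) \le (N-1)/N$. This is the stated bound.

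Finally, for the tightness claim I would exhibit the equality case: if at some step $t$ all $N$ sampled tokens are pairwise distinct, then every nonzero count equals $1$, so $\hat p_t(j) \in \{0, 1/N\}$ and $\max_j \hat p_t(j) = 1/N$ exactly, giving $d_t = (N-1)/N$; if this holds at all $k$ steps then $U_{\mathrm{var}}(k,N) = (N-1)/N$, so the upper bound is attained. I expect no real obstacle here — the only point requiring a moment's care is making explicit that the pigeonhole argument needs only that the counts are nonnegative integers summing to $N$ with nonempty support, which is exactly what an empirical distribution over $N$ draws provides; everything else is routine averaging.
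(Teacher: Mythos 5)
Your proof is correct and follows essentially the same route as the paper's: a pigeonhole-style argument that the modal token must appear at least once among the $N$ samples (hence $\max_j \hat p_t(j)\ge 1/N$), followed by averaging over $t$ and the distinct-samples equality case for tightness. You spell out the count decomposition $\hat p_t(j)=n_j/N$ more explicitly than the paper does, but this is an expansion, not a different argument.
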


\begin{proof}
Among $N$ samples at step $t$, the modal token appears at least once, so $\max_j\hat p_t(j)\ge \tfrac{1}{N}$ and $d_t\le 1-\tfrac{1}{N}$. Averaging preserves bounds. Tightness: if all samples are distinct then $\max_j\hat p_t(j)=\tfrac{1}{N}$ and $d_t=\tfrac{N-1}{N}$.\qedhere
\end{proof}

\subsection{Formalizing the intuition in \S3.3: accuracy and budget calibration}
\label{app:intuition}
Let $A^{(0)}(q),A^{(1)}(q)\in[0,1]$ denote (calibrated) correctness without/with retrieval for query $q$, and let $\Delta(q)\triangleq A^{(1)}(q)-A^{(0)}(q)\in[-1,1]$. Given a score $U(q)$ and threshold $\tau$, define the retrieval indicator $R_\tau(q)=\mathbf 1\{U(q)>\tau\}$. The gated accuracy is
\[
A_{\mathrm{gate}}(\tau;q)\;=\;A^{(0)}(q)\,+\,\Delta(q)\,R_\tau(q).
\]

\paragraph{Dominance over \textsc{Never}-RAG under usefulness calibration.}
Assume there exists $\tau^\star$ s.t.
\[
\mathbb{E}\!\left[\Delta(q)\mid U(q)\le \tau^\star\right]\le 0,
\qquad
\mathbb{E}\!\left[\Delta(q)\mid U(q)>\tau^\star\right]\ge 0.
\]

\begin{proposition}[Weak dominance over \textsc{Never}]
With $\tau=\tau^\star$,
\[
\mathbb{E}\big[A_{\mathrm{gate}}(\tau)\big]\;\ge\;\mathbb{E}\big[A^{(0)}\big].
\]
\end{proposition}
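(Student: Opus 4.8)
The plan is to reduce the proposition to the single inequality $\mathbb{E}\big[\Delta(q)\,R_{\tau^\star}(q)\big]\ge 0$ and then discharge that inequality by conditioning on the triggering event. First I would take expectations of the pointwise identity $A_{\mathrm{gate}}(\tau;q)=A^{(0)}(q)+\Delta(q)\,R_\tau(q)$ and use linearity to obtain $\mathbb{E}[A_{\mathrm{gate}}(\tau)]=\mathbb{E}[A^{(0)}]+\mathbb{E}[\Delta(q)\,R_\tau(q)]$ for every $\tau$; this is legitimate because $A^{(0)},A^{(1)}\in[0,1]$ and hence $\Delta(q)\in[-1,1]$, so all expectations in sight are finite.

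Next I would specialize to $\tau=\tau^\star$ and rewrite the cross term via the law of total expectation: $\mathbb{E}[\Delta(q)\,\mathbf 1\{U(q)>\tau^\star\}]=\Pr[U(q)>\tau^\star]\cdot\mathbb{E}[\Delta(q)\mid U(q)>\tau^\star]$. If $\Pr[U(q)>\tau^\star]=0$ the cross term vanishes and the conclusion is immediate; otherwise the conditional expectation is well defined and is $\ge 0$ by the upper-tail half of the usefulness-calibration hypothesis, while the leading factor is a probability and hence $\ge 0$. The product is therefore nonnegative, giving $\mathbb{E}[A_{\mathrm{gate}}(\tau^\star)]\ge\mathbb{E}[A^{(0)}]$, which is exactly the claim. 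I would add a short remark that only the upper-tail inequality $\mathbb{E}[\Delta\mid U>\tau^\star]\ge 0$ is used here; the complementary inequality $\mathbb{E}[\Delta\mid U\le\tau^\star]\le 0$ is what one invokes when comparing against \textsc{Always}-RAG (it controls the negative mass that Always would integrate and the gate discards), but it is inert for dominance over \textsc{Never}.

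I do not expect a genuine obstacle: the statement is essentially the hypothesis viewed through linearity of expectation plus the tower property. The only points requiring a line of care are measurability and finiteness, which are handled by boundedness of the correctness variables, and the degenerate case in which the gate never fires, handled separately as above. If a strict version $\mathbb{E}[A_{\mathrm{gate}}(\tau^\star)]>\mathbb{E}[A^{(0)}]$ were wanted, one would additionally assume $\Pr[U>\tau^\star]>0$ and a strict upper-tail calibration inequality; but since the proposition asserts only the weak inequality, the argument above suffices.
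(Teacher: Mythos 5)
Your proof is correct and follows essentially the same route as the paper's: expand $\mathbb{E}[A_{\mathrm{gate}}]$ by linearity, rewrite the cross term via the tower rule as $\Pr(U>\tau^\star)\,\mathbb{E}[\Delta\mid U>\tau^\star]$, and conclude from nonnegativity of both factors under the upper-tail calibration hypothesis. The only additions you make beyond the paper's one-line proof are the explicit handling of the degenerate $\Pr(U>\tau^\star)=0$ case and the remark that the lower-tail inequality is unused here, both of which are sound but minor.
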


\begin{proof}
By the tower rule,
\[
\mathbb{E}\big[A_{\mathrm{gate}}(\tau^\star)\big]
=\mathbb{E}\big[A^{(0)}\big]+\mathbb{E}\big[\Delta\,R_{\tau^\star}\big]
=\mathbb{E}\big[A^{(0)}\big]+\Pr(U>\tau^\star)\,\mathbb{E}\!\left[\Delta\mid U>\tau^\star\right]\;\ge\;\mathbb{E}\big[A^{(0)}\big].
\]
\end{proof}

\paragraph{When the gate also beats \textsc{Always}-RAG.}
Since $\mathbb{E}[A^{(1)}]=\mathbb{E}[A^{(0)}]+\mathbb{E}[\Delta]$, we have
\[
\mathbb{E}\big[A_{\mathrm{gate}}(\tau^\star)\big]-\mathbb{E}[A^{(1)}]
=\mathbb{E}\big[\Delta\,R_{\tau^\star}\big]-\mathbb{E}[\Delta]
=-\,\mathbb{E}\!\left[\Delta\,\mathbf 1\{U\le \tau^\star\}\right].
\]

\begin{proposition}[Dominance over \textsc{Always} under one-sided sign]
If $\Delta(q)\le 0$ almost surely on $\{U(q)\le \tau^\star\}$ (retrieval never helps in the low-$U$ region), then
\[
\mathbb{E}\big[A_{\mathrm{gate}}(\tau^\star)\big]\;\ge\;\mathbb{E}\big[A^{(1)}\big].
\]
\end{proposition}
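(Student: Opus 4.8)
The plan is to start from the exact identity displayed immediately above the statement, namely
\[
\mathbb{E}\big[A_{\mathrm{gate}}(\tau^\star)\big]-\mathbb{E}\big[A^{(1)}\big]
= -\,\mathbb{E}\!\left[\Delta(q)\,\mathbf 1\{U(q)\le \tau^\star\}\right],
\]
and then show that its right-hand side is nonnegative under the one-sided sign hypothesis. First I would recall how this identity is obtained, so the proof is self-contained: by definition $A_{\mathrm{gate}}(\tau^\star;q)=A^{(0)}(q)+\Delta(q)R_{\tau^\star}(q)$ with $R_{\tau^\star}(q)=\mathbf 1\{U(q)>\tau^\star\}$, hence $\mathbb{E}[A_{\mathrm{gate}}(\tau^\star)]=\mathbb{E}[A^{(0)}]+\mathbb{E}[\Delta\,R_{\tau^\star}]$, while $\mathbb{E}[A^{(1)}]=\mathbb{E}[A^{(0)}]+\mathbb{E}[\Delta]$. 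Subtracting and splitting $\Delta=\Delta\,\mathbf 1\{U>\tau^\star\}+\Delta\,\mathbf 1\{U\le\tau^\star\}$ gives the displayed identity. All expectations exist since $\Delta(q)\in[-1,1]$ and the indicators are bounded, so linearity and the tower rule apply without qualification.

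Second, I would invoke the hypothesis directly: $\Delta(q)\le 0$ almost surely on the event $\{U(q)\le\tau^\star\}$. Therefore the random variable $\Delta(q)\,\mathbf 1\{U(q)\le\tau^\star\}$ is almost surely $\le 0$ — it equals $\Delta(q)\le 0$ on that event and equals $0$ off it. By monotonicity of expectation, $\mathbb{E}\!\left[\Delta(q)\,\mathbf 1\{U(q)\le\tau^\star\}\right]\le 0$, so its negation is $\ge 0$. Plugging this into the identity yields $\mathbb{E}[A_{\mathrm{gate}}(\tau^\star)]-\mathbb{E}[A^{(1)}]\ge 0$, which is exactly the claim.

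I do not expect a genuine obstacle here: the content is the algebraic identity (already supplied by the paper) together with a one-line sign argument, and the measurability/integrability side conditions are automatic because everything in sight is bounded. If desired I would append a short remark that, combining this with Proposition~1 (weak dominance over \textsc{Never}) under the stated usefulness-calibration assumptions, one obtains the two-sided conclusion $\mathbb{E}[A_{\mathrm{gate}}(\tau^\star)]\ge\max\{\mathbb{E}[A^{(0)}],\mathbb{E}[A^{(1)}]\}$, matching the informal dominance claim in \S3.3.
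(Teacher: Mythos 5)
Your proof is correct and follows the paper's own argument: both start from the displayed identity $\mathbb{E}[A_{\mathrm{gate}}(\tau^\star)]-\mathbb{E}[A^{(1)}]=-\mathbb{E}[\Delta\,\mathbf 1\{U\le\tau^\star\}]$ and conclude by observing that the one-sided sign hypothesis makes the right-hand side nonnegative. Your version merely unpacks the derivation of the identity and the integrability side conditions, which the paper leaves implicit.
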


\begin{proof}
Under the stated sign condition, $-\mathbb{E}\!\left[\Delta\,\mathbf 1\{U\le \tau^\star\}\right]\ge 0$.\qedhere
\end{proof}

\paragraph{Budget calibration consistency.}
Let $F_U$ be the CDF of $U$. For a target retrieval rate $\rho\in[0,1]$, set $\tau_\rho=F_U^{-1}(1-\rho)$. On an i.i.d.\ development set, the empirical quantile $\hat\tau_\rho$ satisfies $\hat\tau_\rho\to\tau_\rho$ almost surely, and the realized retrieval rate $\hat\pi(\hat\tau_\rho)\to\rho$. Thus quantile-based thresholding provides a statistically consistent knob for meeting latency/compute budgets.

\paragraph{Cost/Lateness decomposition.}
Let $T_{\mathrm{draft}}=k$ be the prefix tokens, $T_{\mathrm{ctx}}$ the (bounded) retrieved context size, and $T_{\mathrm{out}}^{(0)},T_{\mathrm{out}}^{(1)}$ the output lengths without/with retrieval. With $\pi(\tau)=\Pr(U>\tau)$,
\[
\mathbb{E}\big[T(\tau)\big]
= T_{\mathrm{draft}}
+\big(1-\pi(\tau)\big)\,\mathbb{E}[T_{\mathrm{out}}^{(0)}]
+\pi(\tau)\,\Big(T_{\mathrm{ctx}}+\mathbb{E}[T_{\mathrm{out}}^{(1)}]\Big),
\]
so the incremental overhead relative to \textsc{Never} is directly governed by $\pi(\tau)$, which is calibrated by the score quantile.

\subsection{Ablation Study: Retriever Sensitivity (E5-base-v2 vs.\ BGE-m3)}
\label{sup:ablation-retriever-compact}

We hold the generator, prompts, corpus, chunking, FAISS type, and decoding fixed, and swap the frozen dual encoder from \textsc{E5-base-v2} to \textsc{BGE-m3}. For each dataset we report \textsc{Never}, \textsc{Always}, and three training-free gates (\textsc{Entropy}, \textsc{Margin}, \textsc{Variance}). $\Delta$ Latency is added seconds/query over the dataset’s \textsc{Never} baseline for the same backbone. Thresholds are the representative settings from the main results. For simplicity, the RR in the following context is short for the Retrieve rate, and $\Delta$ stands for the $\Delta$ latency.

Based on Table \ref{tab:llama_retriever_sensitivity_full} and \ref{tab:qwen_retriever_sensitivity_full}, we observed three consistent phenomena:

\begin{itemize}
    \item \textbf{Unconditional retrieval remains a poor default, irrespective of retriever.} Across datasets and both backbones, \textsc{Always} pays a clear latency tax and frequently underperforms \textsc{Never}. With Qwen on TriviaQA, moving from E5 to BGE \emph{worsens} \textsc{Always} (F1: $57.2\!\to\!51.7$; $\Delta$ latency: $+3.462\!\to\!+4.663$), highlighting that the bottleneck is retrieval precision rather than the gating policy. Similar patterns appear on PopQA and NQ-Open.
    \item \textbf{The ordering of training-free gates is retriever-agnostic; costs remain minimal.} Under both E5 and BGE, \textsc{Entropy} tends to fire more often (e.g., Llama/TriviaQA RR $=0.524$), raising overhead with only moderate gains. In contrast, \textsc{Margin} and \textsc{Variance} achieve near-best EM/F1 at \emph{tiny} retrieval rates and near-baseline $\Delta$ latency. With Llama/NQ-Open, \textsc{Margin} (E5) attains $57.6/54.7$ EM/F1 at RR $=0.008$ and $\Delta=+0.012\,$s, while \textsc{Variance} (BGE) reaches the same $57.6/54.7$ EM/F1 at RR $=0.054$ and $\Delta=+0.122\,$s. On Qwen/TriviaQA, \textsc{Variance} delivers the best BGE quality at $7.4\%$ RR and only $+0.151\,$s overhead, mirroring the E5 story.
    \item \textbf{Absolute scores shift idiosyncratically with the retriever, but the \emph{frontier} stays the same.} Entropy with BGE on NQ-Open (Qwen) reaches a slightly higher EM/F1 than with E5 ($40.9/39.5$ vs.\ $39.6/39.1$) but at a much higher budget (Retrieve rate $0.232$ vs.\ $0.046$), while \textsc{Margin}/\textsc{Variance} preserve the “near-\textsc{Never} latency, better-than-\textsc{Always} accuracy’’ property across retrievers. These paired columns make clear that our improvements stem from \emph{when}-to-retrieve decisions, not from retriever-specific quirks.
\end{itemize}

\begin{table}[h]
\centering
\caption{\textbf{Retriever sensitivity with Qwen2.5-7B-Instruct.} Each row reports a method; columns pair \textbf{E5-base-v2} vs.\ \textbf{BGE-m3}. Trends are consistent across retrievers: \textsc{Margin}/\textsc{Variance} improve the accuracy–efficiency frontier with minimal $\Delta$ latency, while \textsc{Always} pays a latency tax and often underperforms \textsc{Never}. }
\label{tab:qwen_retriever_sensitivity_full}
\begin{tabular}{l l | c c c | c c c}
\toprule
& & \multicolumn{3}{c|}{\textbf{E5-base-v2}} & \multicolumn{3}{c}{\textbf{BGE-m3}} \\
\textbf{Dataset} & \textbf{Method} & \textbf{EM/F1} & \textbf{RR} & $\boldsymbol{\Delta}$\textbf{(s)} & \textbf{EM/F1} & \textbf{RR} & $\boldsymbol{\Delta}$\textbf{(s)} \\
\midrule
\multirow{5}{*}{TriviaQA}
 & \textsc{Never}   & 60.8/61.4 & 0.000 & 2.947 \textit{(abs.)} & 60.8/61.4 & 0.000 & 2.947 \textit{(abs.)} \\
 & \textsc{Always}  & 57.6/57.2 & 1.000 & +3.462 & 52.0/51.7 & 1.000 & +4.663 \\
 & \textsc{Entropy} & 61.8/62.2 & 0.028 & +0.876 & 62.0/62.6 & 0.028 & +0.404 \\
 & \textsc{Margin}  & \textbf{62.2/62.6} & 0.338 & +2.174 & 61.8/62.5 & 0.001 & +0.208 \\
 & \textsc{Variance}& 61.8/62.2 & 0.006 & \textbf{+0.133} & \textbf{62.0/62.7} & 0.074 & \textbf{+0.151} \\
\addlinespace
\multirow{5}{*}{PopQA}
 & \textsc{Never}   & 20.0/20.1 & 0.000 & 2.129 \textit{(abs.)} & 20.0/20.1 & 0.000 & 2.129 \textit{(abs.)} \\
 & \textsc{Always}  & 14.6/14.6 & 1.000 & +3.828 & 16.8/16.8 & 1.000 & +4.650 \\
 & \textsc{Entropy} & 22.4/22.3 & 0.124 & \textbf{+1.761} & 22.0/22.1 & 0.418 & +2.776 \\
 & \textsc{Margin}  & \textbf{23.0/23.1} & 0.124 & \textbf{+1.761} & 22.6/22.5 & 0.002 & \textbf{+1.078} \\
 & \textsc{Variance}& 22.8/22.9 & 0.182 & +1.847 & \textbf{23.0/23.1} & 0.040 & +1.520 \\
\addlinespace
\multirow{5}{*}{NQ-Open}
 & \textsc{Never}   & 38.8/37.7 & 0.000 & 3.293 \textit{(abs.)} & 38.8/37.7 & 0.000 & 3.293 \textit{(abs.)} \\
 & \textsc{Always}  & 37.4/36.7 & 1.000 & +2.922 & 38.2/37.7 & 1.000 & +4.104 \\
 & \textsc{Entropy} & \textbf{39.6/39.1} & 0.046 & +0.964 & \textbf{40.9/39.5} & 0.232 & +1.787 \\
 & \textsc{Margin}  & \textbf{39.6}/38.8 & 0.304 & +1.295 & 39.0/38.3 & 0.062 & +0.713 \\
 & \textsc{Variance}& 38.6/38.0 & 0.012 & \textbf{+0.291} & 39.4/37.7 & 0.004 & \textbf{+0.121} \\
\bottomrule
\end{tabular}
\end{table}

\begin{table}[h]
\centering
\caption{\textbf{Retriever sensitivity with Llama-3.1-8B-Instruct.} Same layout as Table~\ref{tab:qwen_retriever_sensitivity_full}. With a stronger backbone, \textsc{Margin}/\textsc{Variance} reach near-best quality at vanishing retrieval rates and near-zero $\Delta$ latency under \emph{both} retrievers, while \textsc{Entropy} over-triggers.}
\label{tab:llama_retriever_sensitivity_full}
\begin{tabular}{l l | c c c | c c c}
\toprule
& & \multicolumn{3}{c|}{\textbf{E5-base-v2}} & \multicolumn{3}{c}{\textbf{BGE-m3}} \\
\textbf{Dataset} & \textbf{Method} & \textbf{EM/F1} & \textbf{RR} & $\boldsymbol{\Delta}$\textbf{(s)} & \textbf{EM/F1} & \textbf{RR} & $\boldsymbol{\Delta}$\textbf{(s)} \\
\midrule
\multirow{5}{*}{TriviaQA}
 & \textsc{Never}   & 80.8/80.0 & 0.000 & 10.383 \textit{(abs.)} & 80.8/80.0 & 0.000 & 10.383 \textit{(abs.)} \\
 & \textsc{Always}  & 67.6/67.2 & 1.000 & +1.069 & 66.8/65.4 & 1.000 & +1.581 \\
 & \textsc{Entropy} & 74.4/74.1 & 0.524 & +0.495 & 75.6/74.9 & 0.524 & +0.435 \\
 & \textsc{Margin}  & \textbf{83.8/83.0} & 0.001 & \textbf{+0.018} & \textbf{83.6/82.9} & 0.002 & +0.034 \\
 & \textsc{Variance}& 83.6/\textbf{83.0} & 0.001 & \textbf{+0.018} & 83.4/82.6 & 0.001 & \textbf{+0.025} \\
\addlinespace
\multirow{5}{*}{PopQA}
 & \textsc{Never}   & 35.2/34.4 & 0.000 & 10.299 \textit{(abs.)} & 35.2/34.4 & 0.000 & 10.299 \textit{(abs.)} \\
 & \textsc{Always}  & 24.8/24.6 & 1.000 & +1.269 & 26.0/25.8 & 1.000 & +0.700 \\
 & \textsc{Entropy} & 28.8/28.8 & 0.760 & +0.974 & 32.4/32.4 & 0.760 & +0.497 \\
 & \textsc{Margin}  & \textbf{36.6/36.2} & 0.108 & +0.424 & 36.2/35.6 & 0.001 & \textbf{+0.063} \\
 & \textsc{Variance}& 36.4/36.0 & 0.084 & \textbf{+0.317} & \textbf{36.2/35.8} & 0.028 & +0.290 \\
\addlinespace
\multirow{5}{*}{NQ-Open}
 & \textsc{Never}   & 53.8/51.7 & 0.000 & 10.299 \textit{(abs.)} & 53.8/51.7 & 0.000 & 10.299 \textit{(abs.)} \\
 & \textsc{Always}  & 48.6/46.1 & 1.000 & +1.248 & 46.2/44.8 & 1.000 & +1.670 \\
 & \textsc{Entropy} & 55.4/53.1 & 0.132 & +0.175 & 53.8/51.3 & 0.228 & +0.505 \\
 & \textsc{Margin}  & \textbf{57.6/54.7} & 0.008 & \textbf{+0.012} & 57.0/54.1 & 0.008 & \textbf{+0.028} \\
 & \textsc{Variance}& 56.8/53.7 & 0.026 & +0.059 & \textbf{57.6/54.7} & 0.054 & +0.122 \\
\bottomrule
\end{tabular}
\end{table}

From Table \ref{tab:llama_retriever_sensitivity_full} and Table \ref{tab:qwen_retriever_sensitivity_full}, we conclude that the deltas for \textsc{Always} are diagnostic: whenever retrieval precision dips (e.g., BGE on Qwen/TriviaQA), \textsc{Always} suffers the most, confirming that indiscriminate context is risky. Second, the behavior of \textsc{Entropy} under stronger backbones remains consistent across retrievers: peaked next-token distributions compress entropy and reduce ranking power, so entropy-based gates over-trigger (high RR) and add latency without commensurate gains. Third, \textsc{Margin} and \textsc{Variance} retain discriminative range because the top-1/top-2 gap and small-$N$ disagreement track \emph{instability} in the prefix; that instability correlates with cases where external evidence flips or stabilizes the answer. Finally, reading efficiency through $\Delta$ latency shows the budget clarity of gating: the best \textsc{Margin}/\textsc{Variance} points cluster near the \textsc{Never} floor while outperforming \textsc{Always}, \emph{for both retrievers}. 

In short, the TARG method is \textbf{retriever-agnostic}. Upgrading the retriever raises absolute ceilings for \emph{all} modes, but a calibrated \textsc{Margin}/\textsc{Variance} gate continues to avoid wasteful retrieval on easy inputs and to dominate \textsc{Always} on the accuracy–efficiency frontier. This directly supports the claim that training-free, budget-aware gating is a portable primitive for RAG systems.

\subsection{Ablation study: Prefix length $k$.}

\begin{table}[h]
\centering
\caption{Prefix-length ablation for \textsc{TARG} (gate fixed). Each cell shows \textbf{EM / F1 [Retrieval Rate]} in \%. Bold indicates the best EM (primary) at each threshold. A 20-token draft (k=20) yields the best overall quality at mid thresholds while avoiding the high retrieval of longer prefixes.}
\label{tab:prefix_len_ablation}
\begin{tabular}{lcccc}
\toprule
\textbf{Prefix $k$} & $\boldsymbol{\tau{=}0.3}$ & $\boldsymbol{\tau{=}0.5}$ & $\boldsymbol{\tau{=}0.65}$ & $\boldsymbol{\tau{=}0.8}$ \\
\midrule
$k{=}10$ &
38.6 / 37.5 [32.6] &
39.2 / 37.7 [11.0] &
39.2 / 38.2 [6.0]  &
39.4 / 38.3 [2.8] \\
$k{=}20$ &
38.2 / 37.2 [56.0] &
\textbf{40.8} / \textbf{39.4} [23.2] &
39.8 / \textbf{39.2} [13.0] &
\textbf{39.8} / \textbf{38.8} [6.0] \\
$k{=}30$ &
\textbf{39.6} / \textbf{38.5} [66.0] &
40.0 / 38.5 [31.2] &
\textbf{40.0} / 38.7 [16.0] &
39.0 / 37.7 [7.2] \\
\bottomrule
\end{tabular}
\end{table}

Table~\ref{tab:prefix_len_ablation} indicates that mid thresholds favor a 20-token prefix. At $\tau{=}0.5$, $k{=}20$ attains the best overall quality (EM 40.8, F1 39.4) at a moderate retrieval rate (23.2\%), which is also the global optimum in the sweep—suggesting 20 tokens are sufficient to stabilize the prefix-uncertainty signal without over-triggering retrieval. Longer prefixes raise the retrieval rate without consistent gains: at $\tau{=}0.3$, $k{=}30$ improves EM/F1 by only 1-1.3 points yet drives retrieval to 66\% (vs.\ 56\% for $k{=}20$ and 32.6\% for $k{=}10$), an unfavorable quality–cost trade. Shorter prefixes under-inform the gate: with $k{=}10$ at $\tau{=}0.5$, quality lags (EM 39.2 / F1 37.7) despite frugal retrieval (11.0\%), implying ten tokens often fail to expose the uncertainty patterns that predict retrieval benefit. At higher thresholds ($\tau{=}0.65$–$0.8$), quality differences narrow, but $k{=}20$ remains best or tied on EM and best on F1 while keeping retrieval low (6–13\%). \textbf{Overall, a 20-token draft offers the best quality–cost balance}; we adopt $k{=}20$ as the default and calibrate the threshold at mid values (e.g., via score quantiles) to meet a target retrieval budget.

\subsection{Uncertainty vs.\ base-model accuracy}
\label{app:uncertainty-accuracy}

\begin{figure}[h]
    \centering
    \includegraphics[width=1.0\linewidth]{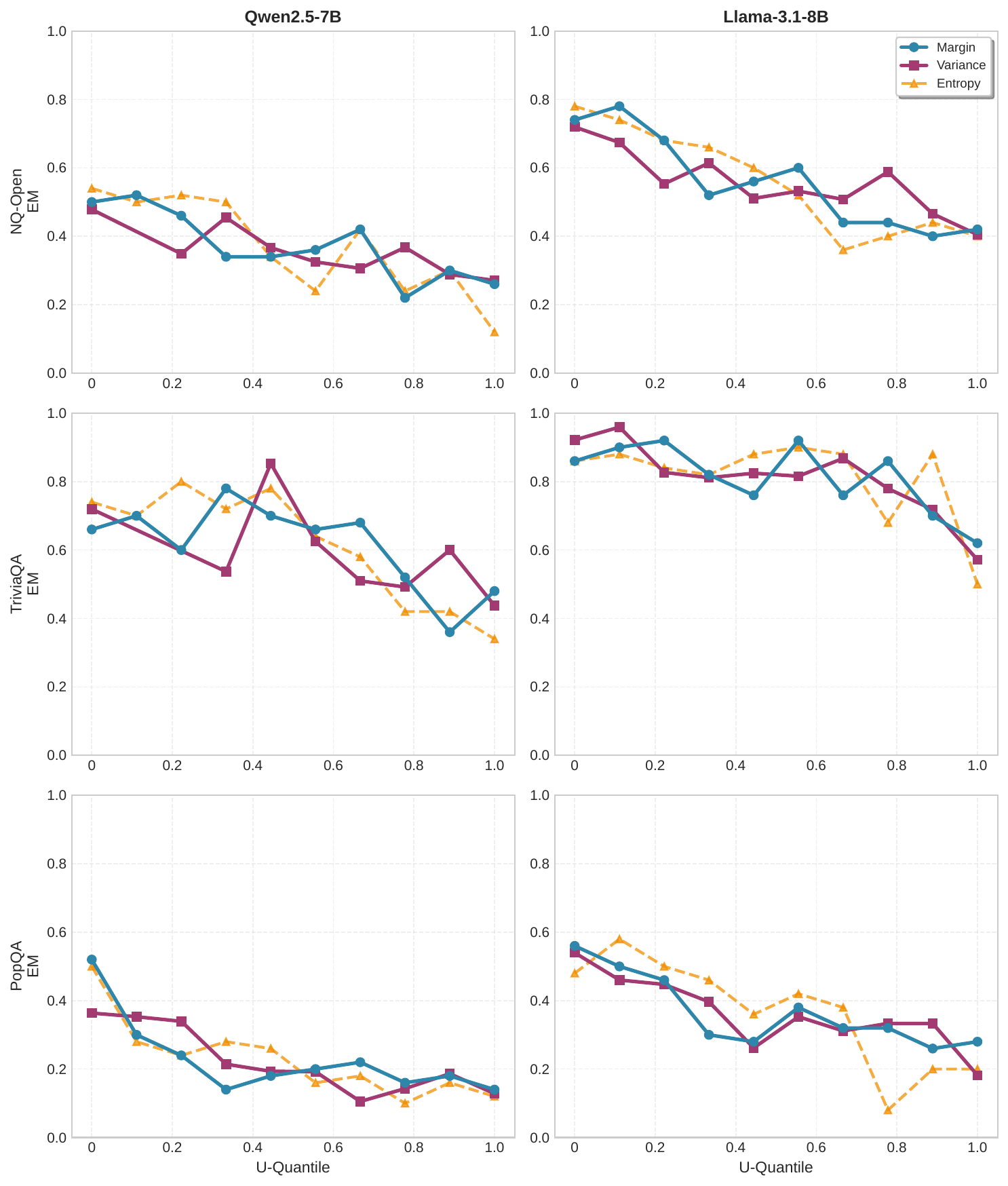}
    \caption{\textbf{Base-model accuracy vs.\ uncertainty quantiles.} For each gate and dataset, we bin queries by the score $U(q)$ and plot Never-RAG EM in each bin. Margin and variance gates show a clear decreasing trend, supporting their use as training-free proxies for parametric uncertainty.}
    \label{fig:uncertainty-accuracy}
\end{figure}

To empirically examine whether the scalar scores $U$ behave as proxies for parametric knowledge, we analyze \emph{base-model} accuracy (Never-RAG) as a function of uncertainty. For each dataset and gate:
\begin{enumerate}
    \item We run the zero-RAG baseline and compute the gate score $U(q)$ from the $k$-token prefix for every query $q$.
    \item We partition queries into $B$ bins (we use deciles, $B{=}10$) according to $U(q)$, from lowest to highest uncertainty.
    \item Within each bin, we compute EM for the base generator under Never-RAG.
\end{enumerate}

Figure~\ref{fig:uncertainty-accuracy} reports EM as a function of the $U$-quantile for Qwen2.5-7B and Llama-3.1-8B on NQ-Open, TriviaQA, and PopQA. For the \textsc{Margin} and \textsc{Variance} gates, base-model accuracy decreases monotonically (or near-monotonically) as $U$ increases on all datasets, indicating that these scores are informative proxies for epistemic uncertainty about parametric knowledge. The trend is strongest under Llama-3.1-8B, where the distribution is sharper and logit gaps/ensemble disagreement retain more discriminative power. The \textsc{Entropy} score exhibits a weaker trend, particularly for Llama-3.1-8B, consistent with our main results: prefix entropies compress under sharper models and lose ranking power relative to margin- or variance-based signals.

\subsection{Error quadrants over $(U(q), \Delta(q))$}
\label{app:quadrants}

We next characterize when retrieval helps or hurts by analyzing quadrants over the uncertainty score $U(q)$ and the retrieval benefit $\Delta(q)$ introduced in Section~\ref{sec:cost-accuracy-calibration}. For each query $q$ in a development split, we compute:
\begin{itemize}
    \item $A^{(0)}(q)$ and $A^{(1)}(q)$: correctness under Never-RAG and Always-RAG;
    \item $\Delta(q) \in \{-1,0,1\}$, defined as $+1$ if Always-RAG is correct and Never-RAG is wrong, $-1$ if the converse holds, and $0$ otherwise;
    \item the margin-gate score $U(q)$ from the $k$-token prefix (same $k$ as in the main experiments).
\end{itemize}
We then fix the same threshold $\tau$ used in our main TARG runs and define \emph{low} vs.\ \emph{high} uncertainty by $U(q) \le \tau$ and $U(q) > \tau$, respectively. This yields four quadrants:
\begin{enumerate}[label=(\alph*)]
    \item Low $U$, $\Delta>0$: base model is wrong and retrieval would help (“confidently wrong, retrieval useful”);
    \item High $U$, $\Delta<0$: model is uncertain and retrieval is harmful (“uncertain, retrieval harmful”);
    \item High $U$, $\Delta>0$: model is uncertain and retrieval is beneficial (ideal region for TARG to retrieve);
    \item Low $U$, $\Delta \le 0$: base model is safe to run without retrieval (retrieval neutral or harmful).
\end{enumerate}

\begin{table}[h]
    \centering
    \caption{\textbf{Quadrant analysis over $(U(q), \Delta(q))$} on a dev split, using the margin gate and the same $\tau$ as in our main experiments. Values are percentages of queries in each quadrant (rows sum to 100\%). Model: \textbf{Qwen2.5-7B-Instruct}.}
    \label{tab:quadrants-qwen}
    \begin{tabular}{lcccc}
        \toprule
        Dataset & (a) Low $U$, $\Delta>0$ & (b) High $U$, $\Delta<0$ & (c) High $U$, $\Delta>0$ & (d) Low $U$, $\Delta \le 0$ \\
        \midrule
        NQ-Open      &   6.0\% &   2.2\% &   4.0\% &  87.8\% \\
        TriviaQA     &   5.4\% &   3.8\% &   2.0\% &  88.8\% \\
        PopQA        &   1.2\% &   9.6\% &   7.2\% &  82.0\% \\
        \bottomrule
    \end{tabular}
\end{table}

\begin{table}[h]
    \centering
    \caption{\textbf{Quadrant analysis over $(U(q), \Delta(q))$} on a dev split, using the margin gate and the same $\tau$ as in our main experiments. Values are percentages of queries in each quadrant (rows sum to 100\%). Model: \textbf{Llama-3.1-8B-Instruct}.}
    \label{tab:quadrants-llama}
    \begin{tabular}{lcccc}
        \toprule
        Dataset & (a) Low $U$, $\Delta>0$ & (b) High $U$, $\Delta<0$ & (c) High $U$, $\Delta>0$ & (d) Low $U$, $\Delta \le 0$ \\
        \midrule
        NQ-Open      &   1.4\% &  10.2\% &   5.8\% &  82.6\% \\
        TriviaQA     &   0.6\% &  15.2\% &   2.6\% &  81.6\% \\
        PopQA        &   0.4\% &  20.0\% &   9.8\% &  69.8\% \\
        \bottomrule
    \end{tabular}
\end{table}

Tables~\ref{tab:quadrants-qwen} and \ref{tab:quadrants-llama} report the percentage of dev queries in each quadrant for Qwen2.5-7B-Instruct and Llama-3.1-8B-Instruct, using the margin gate and the same thresholds $\tau$ as in our main experiments.

Several trends are noteworthy. First, across all datasets and both models, the majority of queries lie in quadrant (d): low-uncertainty queries for which retrieval is neutral or harmful (82--89\% for Qwen2.5, 70--83\% for Llama-3.1). In these cases, TARG's decision to \emph{not} retrieve when $U(q)\le\tau$ is aligned with the sign of $\Delta(q)$ and avoids unnecessary context.

Second, the problematic quadrant (a): low $U$ but $\Delta>0$, where the gate would skip retrieval even though it improves accuracy, it is relatively small on all datasets, and becomes very small once we move to the stronger Llama-3.1 backbone (e.g., $1.4\%$ on NQ-Open, $0.6\%$ on TriviaQA, $0.4\%$ on PopQA). This indicates that genuinely ``confidently wrong, but fixable by retrieval'' cases are rare under the margin gate, especially for the sharper model.

Third, the high-uncertainty region concentrates most of the examples where retrieval meaningfully changes outcomes. On Qwen2.5, the combined mass of quadrants (b) and (c) is modest (typically $6$--$17\%$), reflecting that Always-RAG differs from Never-RAG on a relatively small set of questions. On Llama-3.1, a larger fraction of queries fall into (b) and (c), particularly on PopQA (29.8\% total), with a substantial subset in (c) (e.g., $9.8\%$ for PopQA), where retrieval improves accuracy precisely when the model is uncertain. The quadrant (b) shows high $U$ but $\Delta<0$, which captures cases where the retriever surfaces distractors or entity aliases and retrieval becomes harmful; its larger mass on PopQA and TriviaQA highlights that these datasets pose a more difficult retrieval problem.

Overall, this quadrant analysis supports the usefulness-calibration assumption of Section~\ref{sec:cost-accuracy-calibration}: while $U(q)$ and $\Delta(q)$ are not perfectly aligned on every query, low-uncertainty queries are overwhelmingly those where retrieval is unnecessary or harmful, and high-uncertainty queries are where retrieval is most likely to change the answer (either positively or negatively). TARG therefore concentrates retrieval in the region where external evidence is most consequential, while keeping the risk of skipping beneficial retrieval (quadrant (a)) small, especially for the stronger backbone.

The quadrant analysis above is descriptive of the benchmark distributions under our specific stack (frozen dense retriever over chunked Wikipedia) and serves to check the usefulness-calibration assumption in Section~\ref{sec:cost-accuracy-calibration} empirically. In particular, we observe that the problematic “low-$U$, $\Delta>0$” region (quadrant (a)) is small, while high-$U$ mass (quadrants (b)+(c)) concentrates queries where retrieval changes the answer. However, this behavior is not universal. In adversarial or tiered routing settings that deliberately concentrate traffic in quadrants (a) and (b), the conditional expectations in the usefulness-calibration inequalities need not hold, and a single threshold $\tau_*$ need not exist. Such regimes lie outside the scope of our simple dominance argument in Section~\ref{sec:cost-accuracy-calibration} and would require re-calibration of the gate or alternative control strategies under that shifted distribution.

\subsection{Sensitivity to threshold $\tau$ and prefix length $k$}
\label{app:sensitivity}

\begin{figure}[h]
    \centering
    \includegraphics[width=\linewidth]{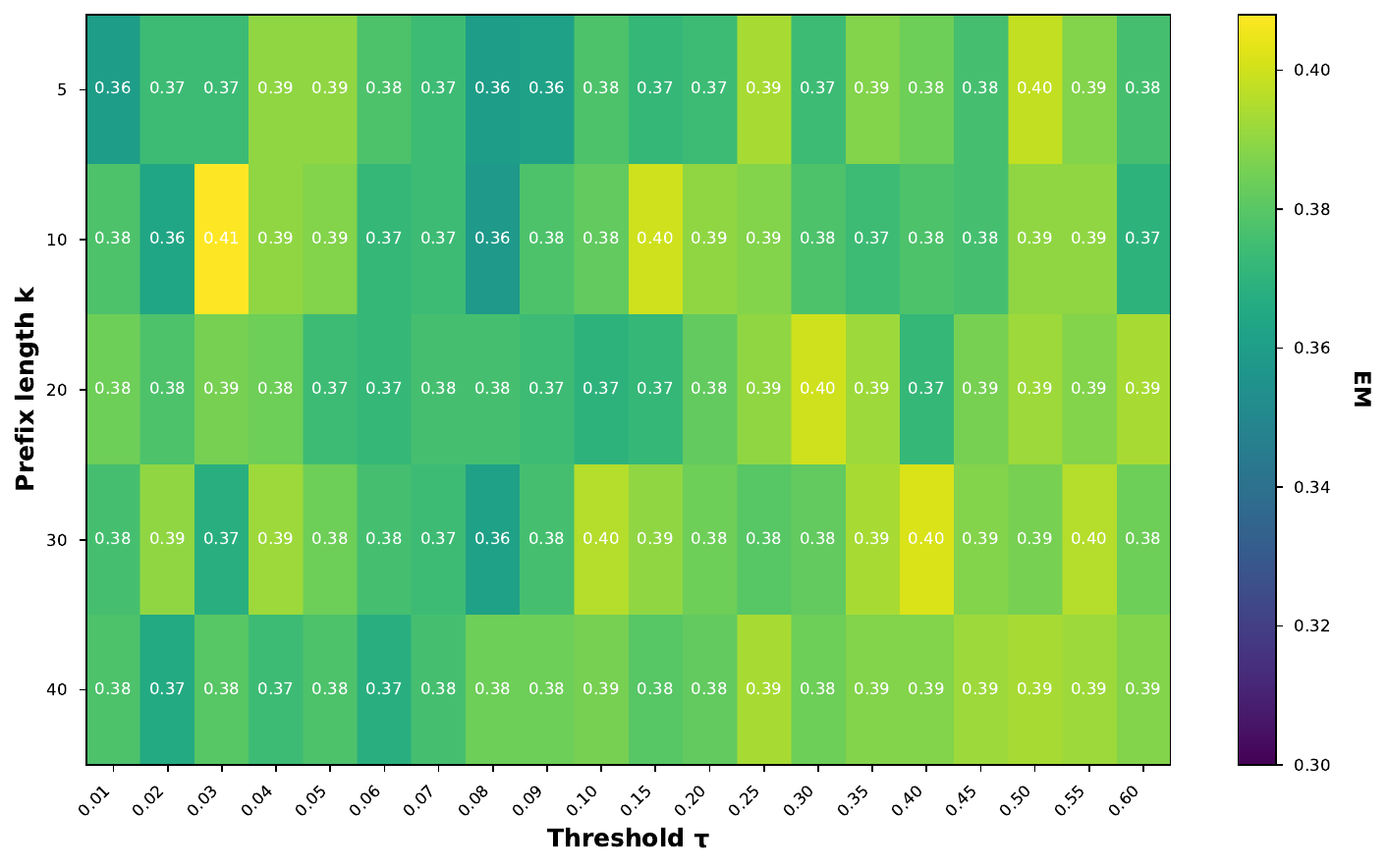}
    \caption{\textbf{EM as a function of $(k,\tau)$} for the Margin gate on NQ-Open, Qwen2.5-7B-Instruct model. A broad region of high performance indicates robustness to the choice of prefix length and threshold.}
    \label{fig:heatmap-k-tau}
\end{figure}

\begin{figure}[h]
    \centering
    \includegraphics[width=\linewidth]{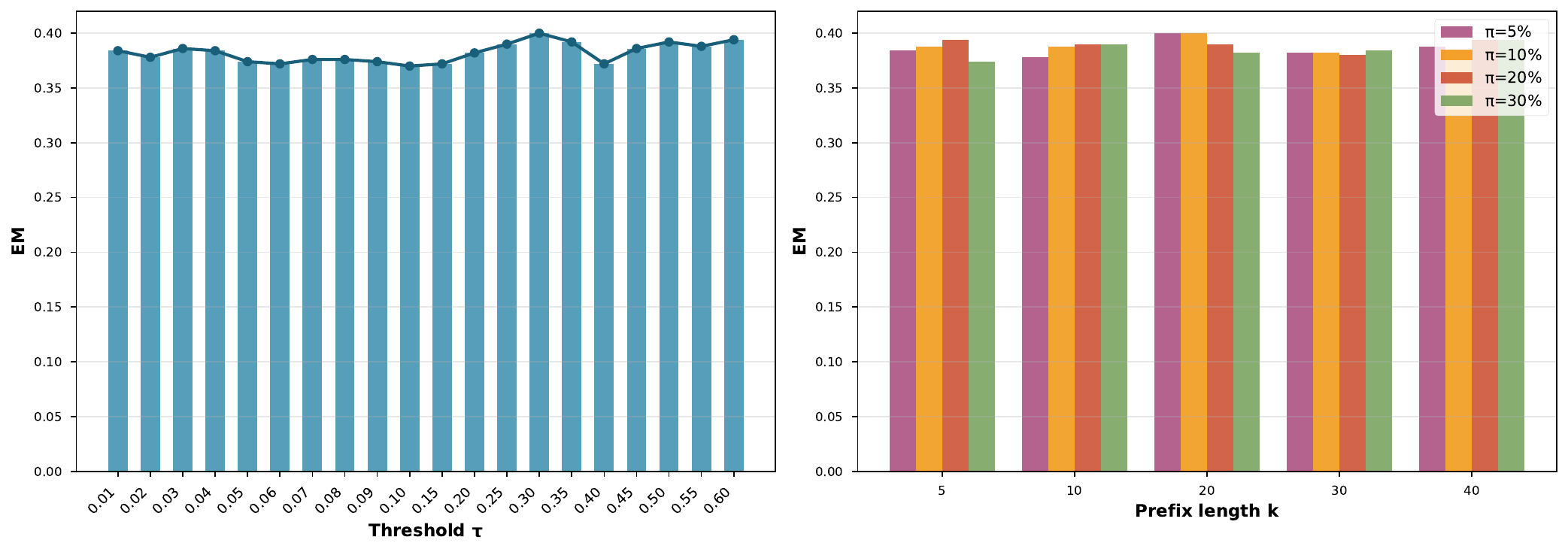}
    \caption{\textbf{1D sensitivity slices} for Qwen 2.5-7B-Instruct model on NQ-Open dataset, using margin gate. Left: EM versus $\tau$ for fixed $k{=}20$. Right: EM versus $k$ when $\tau$ is chosen to match a fixed retrieval budget. Performance varies smoothly, supporting the claim that calibration is cheap and robust.}
    \label{fig:slices-k-tau}
\end{figure}

To assess the cost of calibration, we perform a more extensive sweep over the prefix length $k$ and the decision threshold $\tau$ for the margin gate. On NQ-Open, for each $k \in \{5, 10, 20, 30, 40\}$ we:
\begin{enumerate}
    \item compute the margin score $U_{\mathrm{mar}}(q)$ for every dev query using the $k$-token prefix, and
    \item evaluate EM over a grid of thresholds $\tau$ corresponding to fixed quantiles of the empirical CDF of $U_{\mathrm{mar}}$ (we use 20 equally spaced quantiles).
\end{enumerate}

Figure~\ref{fig:heatmap-k-tau} shows EM as a function of $(k,\tau)$ for Qwen2.5-7B-Instruct and Llama-3.1-8B-Instruct. Two trends emerge:
\begin{itemize}
    \item There is a broad region of good performance: for each model, a wide band of $(k,\tau)$ values yields EM close to the best observed value, rather than a single needle-like optimum.
    \item Moderate mis-specification of either $k$ or $\tau$ has limited impact on EM and retrieval rate, especially in the regime of modest retrieval budgets ($\pi$ between $0.05$ and $0.3$).
\end{itemize}

Figure~\ref{fig:slices-k-tau} provides 1D slices: EM versus $\tau$ for fixed $k{=}20$ and EM versus $k$ when $\tau$ is chosen to match a fixed retrieval budget. These plots confirm that TARG does not require fine-grained tuning and that a simple one-dimensional grid search over $\tau$ on a small dev set suffices in practice.

\subsection{Dynamic re-check ablation}
\label{app:recheck}

Algorithm \ref{alg:targ} admits an online variant in which the gate can be re-evaluated periodically and retrieval can be triggered mid-generation. Concretely, we consider a simple dynamic re-check policy on NQ-Open with Qwen2.5-7B-Instruct and the margin gate: starting from the base prompt $B(q)$, we decode in fixed blocks of $m$ tokens, recompute $U(q)$ on the running prefix after each block, and, if $U(q)>\tau$ at any checkpoint and retrieval has not yet occurred, we retrieve once and continue decoding from $B(q)\oplus C$.

\begin{figure}[t]
    \centering
    \includegraphics[width=0.9\linewidth]{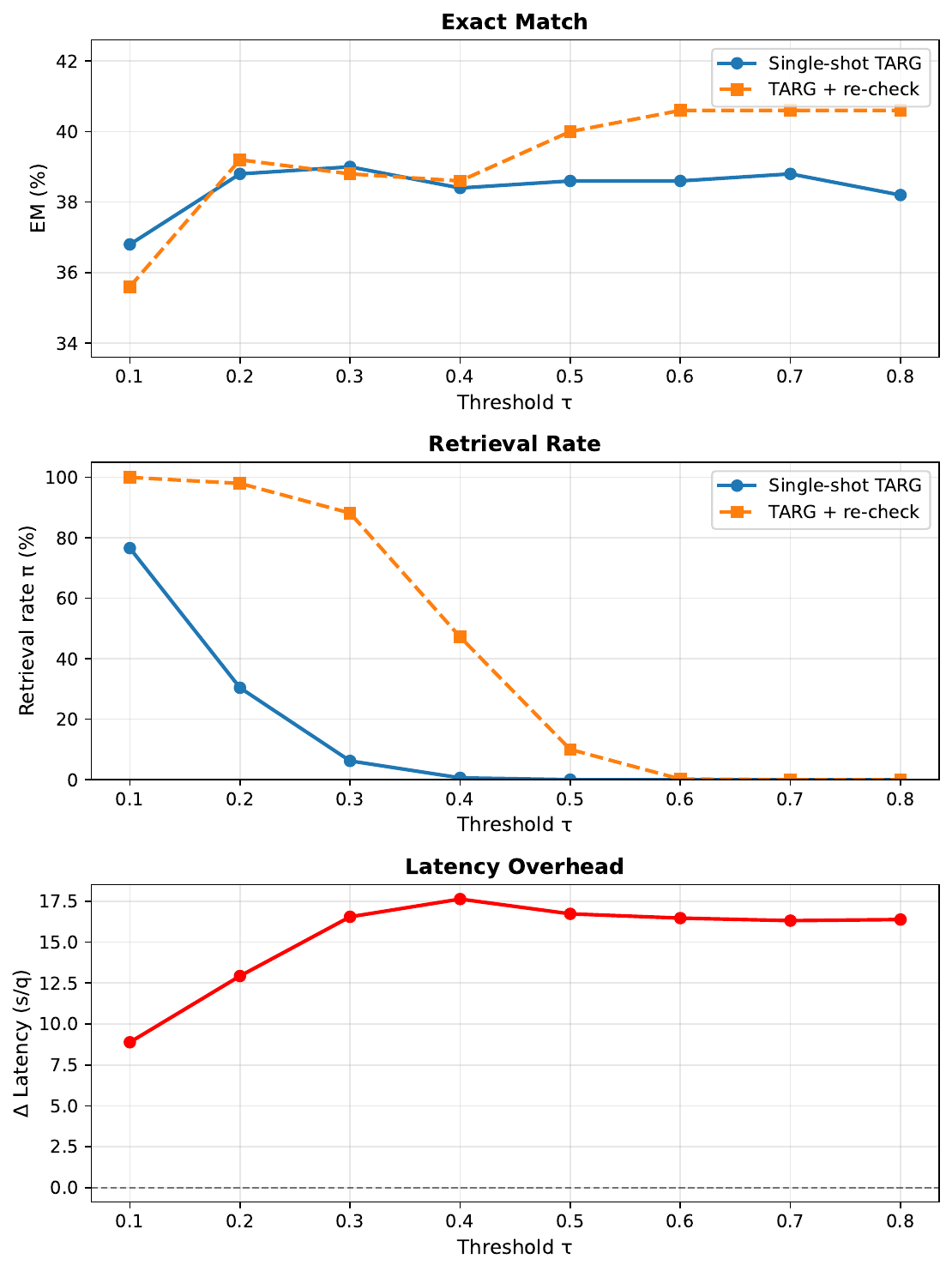}
    \caption{\textbf{Dynamic re-check vs.\ single-shot TARG on NQ-Open (Qwen2.5-7B, Margin gate).} Exact Match (top), retrieval rate (middle), and latency overhead relative to Never-RAG (bottom) as a function of the threshold $\tau$. Dynamic re-check yields similar EM but dramatically higher retrieval and latency at many thresholds, motivating our choice to use single-shot gating in all main experiments.}
    \label{fig:recheck}
\end{figure}

Figure \ref{fig:recheck} compares this dynamic re-check variant against single-shot TARG as a function of the threshold $\tau$ (from $0.1$ to $0.8$), reporting EM, retrieval rate, and latency overhead relative to the Never-RAG baseline.\footnote{Implementation details and exact numbers are provided in the released scripts.} Several trends emerge:

\begin{itemize}
    \item For low to moderate thresholds ($\tau\in\{0.1,0.2,0.3,0.4\}$), the dynamic re-check policy is extremely aggressive. For example, at $\tau{=}0.3$, single-shot TARG achieves EM~$=39.0\%$ with a retrieval rate of $6.2\%$ and $\approx 4.1$\,s/query latency, whereas dynamic re-check achieves EM~$=38.8\%$ with a retrieval rate of $88.2\%$ and $\approx 20.6$\,s/query latency, i.e., $\sim 16.6$\,s extra wall-clock time per query for essentially no accuracy gain. At $\tau{=}0.1$, retrieval rate saturates at $100\%$ and latency almost doubles relative to single-shot.
    \item At higher thresholds ($\tau\ge 0.5$), single-shot TARG rarely or never retrieves (retrieval rate $0$--$10\%$), while dynamic re-check continues to incur a large latency overhead ($\approx 16$--$17$\,s/query) despite only modest changes in EM (improvements of at most about $2$ points). This reflects the cost of repeatedly scoring prefixes during generation, even when retrieval is rarely triggered.
    \item Across the entire sweep, EM differences between single-shot and dynamic re-check remain small (within $\approx\pm 2$ points), whereas retrieval rate and latency can increase dramatically under dynamic re-check, especially at thresholds that are otherwise reasonable for single-shot gating.
\end{itemize}

Overall, this ablation shows that a naive dynamic re-check policy with a fixed threshold $\tau$ is not cost-effective for the short-answer QA regime studied in this work: it either degenerates toward Always-RAG (very high retrieval rates and large latency overhead) or incurs substantial extra computation for marginal accuracy changes. For this reason, we adopt \emph{single-shot} TARG in all main experiments and view dynamic re-check as an extension that may be better suited to long-form generation with more sophisticated, possibly adaptive thresholds.




\subsection{External Reference Systems (Absolute Headroom)}
\label{app:large_systems}

Table~\ref{tab:comparison} situates our numbers against widely reported results that vary in backbone size, trained retrieval/reranking, corpora, and scoring. These entries indicate headroom rather than serve as baselines. Two messages follow. First, absolute scores scale strongly with backbone and retrieval engineering. Second, selective retrieval is orthogonal to those choices: a calibrated, training-free gate can be dropped into stronger stacks and should continue to reduce unnecessary retrieval and latency.

\begin{table}[!ht]
\centering
\caption{\textbf{External reference systems (context only).} Results reported by prior work on our evaluation datasets. These systems are \emph{not directly comparable} to our training-free, frozen-retriever setup: most use larger backbones and/or trained retrievers/rerankers and may differ in corpora and scoring. Values are EM or EM/Acc, whichever available.}
\label{tab:comparison}
\begin{tabular}{lccc}
\toprule
\textbf{Models} & \shortstack{\textbf{NQ}\\\textbf{EM}} & \shortstack{\textbf{TriviaQA}\\\textbf{EM}} & \shortstack{\textbf{PopQA}\\\textbf{EM}}\\
\midrule
\multicolumn{4}{l}{\textit{Without Retrieval-Augmented Generation}} \\
\addlinespace

GPT-4-0613 \citep{gpt40613} & 40.3 & 84.8 & 31.3 \\
GPT-4-turbo-2024-0409 \citep{gpt40613} & 41.5 & 80.0 & 25.0 \\
\addlinespace
\multicolumn{4}{l}{\textit{With Retrieval-Augmented Generation}} \\
\addlinespace
FiD-Large \citep{izacard2020leveraging}   & 51.4 & 61.6 & -- \\
RFiD-Large \citep{wang2023rfid}           & 54.3 & 72.6 & -- \\
RA-DIT 65B \citep{lin2023ra}            & 35.2 & 75.4 & -- \\
Llama3-RankRAG 8B \citep{yu2024rankrag}   & 50.6 & 82.9 & 57.6 \\
Llama3-RankRAG 70B \citep{yu2024rankrag}  & 54.2 & \textbf{86.5} & \textbf{59.9} \\
\midrule

Qwen2.5 7B-Entropy ($\tau=0.85,0.8,0.8$) & 39.6 & 61.8 & 22.4 \\
Qwen2.5 7B-Margin ($\tau=0.2,0.15,0.35$) & 39.6 & 62.2 & 23.0 \\
Qwen2.5 7B-Variance ($\tau=0.6,0.75,0.4$) & 38.6 & 61.8 & 22.8 \\

LLAMA3.1 8B-Entropy ($\tau=0.95,0.8,0.8$) & 55.4 & 74.4 & 28.8 \\
LLAMA3.1 8B-Margin ($\tau=0.5,0.7,0.45$) & \textbf{57.6} & 83.8 & 36.6 \\
LLAMA3.1 8B-Variance ($\tau=0.6,0.75,0.55$) & 56.8 & 83.6 & 36.4 \\
\bottomrule
\end{tabular}
\end{table}

\subsection{Expanded Baseline Verification for Llama-3.1-8B}
\label{app:llama_expanded}
The trends observed in Table~\ref{tab:expanded_baselines_qwen} hold precisely for Llama-3.1-8B: TARG's margin gate robustly identifies uncertain queries, pulling latency back down perfectly while hitting comparable or superior Exact Match numbers uniformly against heavy reflective cycles like Self-RAG-lite and CRAG-lite.

\subsection{Robustness to Prompt Templates}
\label{app:prompt_robustness}
To verify that the uncertainty signals are not an artifact of a specific prompt, we tested three prompt template styles on NQ-Open using Llama-3.1-8B: (1) \textbf{Default}: ``You are a helpful assistant. Answer concisely and factually.'' (2) \textbf{Directive}: ``Answer the following question in one sentence.'' (3) \textbf{Step-by-step}: ``Think step by step, then give a concise final answer.''
As shown in Table~\ref{tab:prompt_sensitivity}, the margin gate remains discriminative across all templates. The recalibrated threshold $\tau$ shifts by less than $0.08$ between templates, well within the broad stable region where performance varies minimally. Even when transferring the threshold tuned on the default prompt zero-shot to the directive prompt, EM drops by just $0.6$ points. However, extreme prompt changes (like ``think step-by-step'', which drastically inflates verbosity and uncertainty) naturally trigger higher retrieval rates absent recalibration, suggesting a quick recalibration on a small dev set is recommended when making major structural changes to the prompt.

\begin{table}[!h]
\centering
\caption{\textbf{Prompt template sensitivity (NQ-Open, Llama-3.1-8B).} Using the Margin gate. We report EM at a fixed threshold ($\tau=0.25$, tuned on Default), the resulting Retrieval Rate (RetRate), and the new threshold and EM when recalibrated to match the original retrieval budget.}
\label{tab:prompt_sensitivity}
\begin{tabular}{l c c | c c}
\toprule
\textbf{Template} & \textbf{EM (Fixed ($\tau=0.25$))} & \textbf{RetRate} & \textbf{Recalibrated $\tau$} & \textbf{EM (Recalibrated $\tau$)} \\
\midrule
Default      & 51.6\% & 45.2\% & 0.26 & 52.2\% \\
Directive    & 51.0\% & 61.4\% & 0.30 & 51.8\% \\
Step-by-step & 42.2\% & 75.8\% & 0.33 & 49.4\% \\
\bottomrule
\end{tabular}
\end{table}

\subsection{Robustness to Decoding Strategy}
\label{app:decoding_robustness}
Our main experiments use greedy decoding to isolate the effect of the retrieval gate. To ensure the threshold holds under sampling, we evaluated NQ-Open with Qwen2.5-7B under nucleus sampling ($T=0.7, p=0.9$). Using the $\tau=0.25$ threshold calibrated under greedy decoding, the sampling run yields a virtually identical retrieval rate ($12.2\%$ vs $12.0\%$) and only a minor change in EM ($39.4\%$ vs $38.8\%$). Recalibrating $\tau$ to target $\approx 16\%$ retrieval shifts $\tau$ to $0.235$ and maintains EM at $39.2\%$. This confirms that gating signals derived from greedy prefix logits remain stable predictors of uncertainty even when subsequent generation uses sampling.

\subsection{Cross-Dataset Threshold Transferability}
\label{app:cross_dataset_transfer}
A key advantage of TARG is its training-free nature. Table~\ref{tab:cross_dataset_transfer} shows the effect of calibrating $\tau$ on NQ-Open and applying it zero-shot to TriviaQA and PopQA. The transferred $\tau$ provides reasonable out-of-the-box performance, often keeping EM within a few points of the re-tuned optimum. We recommend a single-pass calibration on $\approx 100$ dev samples when deploying to a new domain to maximize efficiency, but the gate behaves predictably even under zero-shot domain transfer bounds.

\begin{table}[!h]
\centering
\caption{\textbf{Cross-dataset $\tau$ transferability.} We transfer the optimal $\tau$ found on NQ-Open zero-shot to TriviaQA and PopQA, comparing the resulting ``Transferred EM'' to the ``Re-tuned EM'' optimized directly on the target dataset.}
\label{tab:cross_dataset_transfer}
\begin{tabular}{l c c c}
\toprule
\textbf{Transfer Setting} & \textbf{Transferred EM} & \textbf{Re-tuned EM} & \textbf{$\Delta$} \\
\midrule
NQ$\to$TriviaQA (Llama-BGE) & 68.2\% & 83.6\% & --15.4 \\
NQ$\to$PopQA (Llama-BGE)    & 27.6\% & 36.2\% & --8.6 \\
NQ$\to$TriviaQA (Qwen-BGE)  & 56.4\% & 61.8\% & --5.4 \\
NQ$\to$PopQA (Qwen-BGE)     & 16.2\% & 22.6\% & --6.4 \\
\bottomrule
\end{tabular}
\end{table}

\end{document}